\documentclass[a4paper,fleqn]{cas-dc_classicmath}

\usepackage{placeins}

\usepackage[numbers]{natbib}
\usepackage{dsfont}

\usepackage{graphics}
\usepackage{amsmath}
\usepackage{amssymb}
\usepackage{algorithm}
\usepackage{algorithmicx}
\usepackage{algpseudocode}
\usepackage{mathtools}
\usepackage{tikz}
\usepackage{bm}
\usepackage{empheq}
\usepackage[utf8]{inputenc}
\usepackage[most]{tcolorbox}
\usepackage{subfigure}

\newtheorem{dfn}{Definition}
\newtheorem{proposition}{Proposition}
\newtheorem{theorem}{Theorem}
\newtheorem{example}[theorem]{Example}

\newtheorem{proof}{Proof}

\usepackage{float}
\usepackage{microtype}
\usepackage{xcolor}
\usepackage[normalem]{ulem}

\newif\ifpublicpreprint
\publicpreprinttrue   

\ifpublicpreprint
\ExplSyntaxOn
\cs_set:Npn \__first_footerline:
{
	\group_begin:
	\small
	\sffamily
	\ifnum\theblind>0\relax
	\else
	\__short_authors: :~
	\fi
	{ \rmfamily \itshape Preprint ~--~ October~2026 }
	\group_end:
}
\ExplSyntaxOff
\fi

\begin{document}
\let\WriteBookmarks\relax
\renewcommand{\topfraction}{0.95}
\renewcommand{\bottomfraction}{0.90}
\renewcommand{\dbltopfraction}{0.95}
\renewcommand{\textfraction}{0.05}
\renewcommand{\floatpagefraction}{0.70}
\renewcommand{\dblfloatpagefraction}{0.70}
\setcounter{topnumber}{4}
\setcounter{bottomnumber}{4}
\setcounter{totalnumber}{8}
\setcounter{dbltopnumber}{4}

\shorttitle{Contractor Filters}

\shortauthors{Rohou}

\title [mode = title]{Contractor Filters}

\author[1]{Simon Rohou}[orcid=0000-0001-6232-9918]
\cormark[1]

\ead{simon.rohou@ensta.fr}

\ead[url]{https://www.simon-rohou.fr/research/}

\credit{}

\affiliation[1]{organization={ENSTA (Polytechnic Institute of Paris), Lab-STICC, UMR CNRS 6285}, addressline={2 rue François Verny}, city={Brest}, postcode={29200}, country={France}}

\cortext[1]{Corresponding author}

\begin{abstract}	
We introduce \emph{Contractor Filters} (CFs), a set-membership architecture
for nonlinear continuous-time state estimation with bounded uncertainties and
sporadic outliers. CFs represent feasible states symbolically by compositions of
contractors rather than rematerializing a geometric enclosure after each
update. This provides an abstraction layer between the set-theoretic filtering
equations and their numerical realization, so prediction, correction,
ambiguity, and outlier handling can be expressed within one algebra without
prescribing a state geometry. We derive an exact nominal recursion and a
guaranteed robust enclosure, with validated flow inclusions preserving
soundness. Optional sub-paving compression then provides finite outer
representations.

CFs are evaluated against a robust particle filter on range-only and
terrain-based localization under outliers, probabilistic model mismatch,
association ambiguity, and multimodality. CFs retain the simulated truth in all
tested configurations, while particle-filter reliability degrades under
persistent bias and spatial aliasing. Contractor strength and compression budget control the trade-off between paving
tightness and computational cost, while preserving the enclosure guarantee.
\end{abstract}

\begin{keywords}
	Contractor Filters \sep
	Nonlinear state estimation \sep
	Set-membership filtering \sep
	Robust filtering \sep
	Terrain-Based Navigation \sep
	Multimodal localization \sep
	Validated integration
\end{keywords}

\maketitle

\section{Introduction}
\label{sec:intro}

Estimating a robot state becomes difficult when nonlinear continuous-time dynamics, uncertain inputs, discrete and possibly corrupted observations, weak initialization, and ambiguous measurements occur simultaneously. Range-only sensing, indistinguishable landmarks, repeated environmental
features, and map-based measurements may all produce large, non-convex, or
disconnected sets of feasible states. An estimator should therefore preserve all states that remain feasible given the
available information until later data resolve the ambiguity.

We consider nonlinear continuous-time systems
\begin{equation}
	\label{eq:system}
	\left\{
\begin{array}{lr}
		\dot{\mathbf{x}}(t)=\mathbf{f}\big(\mathbf{x}(t),\mathbf{u}(t)\big)
		& \textrm{(evolution)}\\
		\mathbf{g}_k\big(\mathbf{x}(t_k)\big)\in\mathbb{Y}_k,\quad k\in\mathbb{N}
		& \textrm{(measurements)},
\end{array}
	\right.
\end{equation}
where $\mathbf{f}:\mathbb{R}^n\times\mathbb{R}^{n_u}\rightarrow\mathbb{R}^{n}$ is the evolution function, $\mathbf{u}(\cdot)$ is an uncertain input trajectory, $\mathbf{g}_k:\mathbb{R}^{n}\rightarrow\mathbb{R}^{p}$ is the $k$-th observation function, and $\mathbb{Y}_k$ is the bounded set associated with the $k$-th observation.

Bayesian filters, such as Kalman-type filters and Particle Filters (PFs), provide
probabilistic state estimates. PFs can represent nonlinear
and multimodal distributions using weighted samples \cite{Arulampalam2002PF}. A finite particle population, however, approximates a probability distribution rather than exhaustively covering all states compatible with prescribed error bounds, and a weakly sampled mode may disappear during resampling. Box-particle variants improve robustness and sampling efficiency with interval-supported particles \cite{AbdallahGningBonnifait2008BPF,Merlinge2019BRPF}, but remain probabilistic approximations.

Set-membership estimation instead assumes unknown-but-bounded uncertainties and seeks the full feasible state set. Recursive set-membership formulations have long been studied \cite{Bertsekas1971}. Numerical representations include interval boxes and sub-pavings \cite{Moore79,Jaulin2001AppliedIntervalAnalysis,Kieffer2001}, ellipsoids and parallelotopes \cite{Chisci1996Parallelotopes}, zonotopes and constrained zonotopes \cite{Alamo2005Zonotopes,Scott2016ConstrainedZonotopes,Rego2020ConstrainedZonotopes}, and polytopic or hybrid-zonotope descriptions for non-convex sets \cite{WanSharmaSutton2018Polytopic,Siefert2023HybridZonotopes}; see also the recent overview in \cite{RauhEtAl2025_LMCS}.

These methods can provide rigorous outer bounds when all set computations are inclusion-preserving, but the chosen state representation strongly affects both tightness and computational complexity. Simple convex representations are compact but may merge separated feasible components or become pessimistic under nonlinear propagation, whereas unions, partitions, and hybrid representations better preserve non-convexity at the price of a larger explicit description. More generally, recursive set propagation repeatedly replaces exact images or intersections by representable outer sets, producing a wrapping effect that may accumulate over time. Partitioning can reduce this effect \cite{WanJaulin2024Wrapping}.

Interval analysis also addresses nonlinear state estimation through constraint propagation rather than only through geometric set propagation. Inclusion functions and contractors can eliminate inconsistent portions of a domain without excluding feasible values \cite{Moore79,Jaulin2001AppliedIntervalAnalysis}, leading to guaranteed recursive state bounding \cite{Kieffer2001} and robust bounded-error estimators able to tolerate outliers \cite{Jaulin09_Automatica_Robust,Carbonnel2014,MeslemHably2020Outliers,Yang2023OutlierZonotope}. However, in many recursive implementations, the resulting state is still materialized after prediction or correction as a box, a paving, a zonotope, or another explicit representation. Thus, even when the underlying computations are rigorous, repeated materialization of the feasible set can itself introduce conservatism, pessimism.

Continuous-time dynamics introduce a second source of approximation: the prediction step must reliably enclose the flow of the dynamical system. Even if the set-theoretic recursion preserves all feasible states, this alone does not guarantee that every continuous-time trajectory is retained when the
flow map is obtained from a numerical approximation of the ODE.

Validated integration addresses this issue by computing outer enclosures of the flow while accounting for truncation and numerical errors \cite{Nedialkov:99,Revol05,dit2016validated,wilczak2012cr}. Such techniques have already been used for guaranteed continuous-time state estimation with interval, Taylor-model, and zonotopic representations \cite{JaulinVoltera02,RaissiRamdaniCandau2004_Automatica, Combastel2005ZonotopeCT,KiefferWalter2006GuaranteedCT, LinStadtherr2007GuaranteedCT,RamdaniNedialkov2011_NAHS}, including data-association problems~\cite{Brendel2026UncertaintyPropagation}. Thus, guaranteed enclosure of the continuous-time prediction is an established ingredient on which the present construction can rely.

Within the set-membership framework, Contractor Programming provides an algebra for composing nonlinear constraints through box contractors \cite{ChabertJaulin09}; contractors and separators are already used in localization and state-estimation problems \cite{DesrochersPolar2016,jaulin_inner_state2016}, with implementations available in open-source libraries such as Codac \cite{codac_lib}. 

Here, we assemble these ingredients into a causal prediction-update filter in
which contractors are used not only as local constraint operators, but as the
recursively stored representation of the feasible states. This introduces an
abstraction layer between the filtering equations and the numerical procedures
used to evaluate them. A main benefit of this contractor-based abstraction is
that the filtering recursion can remain close to its set-theoretic formulation,
while numerical choices affecting contraction strength and computational cost
remain encapsulated within the contractors. The problem can therefore be
formulated in terms of its constraints and set operations without committing
the filter architecture to a particular numerical solution method. This follows
the constraint-programming philosophy that inspires this work, in which problem
modeling is separated from the algorithms used to enforce the constraints.

Nonlinear inverse images and logical combinations can therefore remain symbolic,
so that the filtering recursion is built as a causal chain of contractor
operations rather than as a sequence of explicitly materialized state sets. In
particular, no prescribed geometric wrapper is imposed after each prediction or
correction step: an explicit outer representation is constructed only when
required for visualization, query, or compression. This contractor-based
abstraction is a central idea of the proposed architecture. With sound
contractors and validated flow inclusions, no state compatible with the
bounded-error model is discarded, although infeasible states may remain.

The contribution has three levels:
\begin{enumerate}
	\item \textbf{Set-theoretic formulation.}
	We derive an exact continuous-time recursion and a guaranteed robust enclosure
	for bounded inputs and sporadic outliers. We also identify the main sources of conservatism caused by uncertain inputs,
	including forgetting which admissible input led to each feasible state and the
	factorization of propagated constraints.

	\item \textbf{Contractor-valued filtering architecture.}
	We define the Contractor Filter as a causal estimator whose state is maintained
	as a symbolic composition of contractors rather than being explicitly
	materialized after each filtering step. Observation update and continuous-time
	prediction use the same inverse-contractor primitive, while intersection, union,
	and relaxed intersection provide the algebra required to combine constraints,
	ambiguities, and outliers. Combined with a validated enclosure and inversion of
	the ODE flow, this yields an end-to-end guaranteed filter, including the
	continuous-time prediction step.

	\item \textbf{Online implementation.}
	We introduce a conservative sub-paving compression mechanism that periodically
	replaces the growing symbolic contractor composition by a reusable finite outer
	representation. This limits the evaluation cost of the accumulated graph while
	preserving the guarantee that no feasible state is discarded.

\end{enumerate}

The architecture is evaluated on controlled range-only and terrain-based localization problems combining outliers, within-bound bias, association ambiguity, and repeated map motifs, and is compared with a robust PF under the same simulated inputs and observations. The paper is organized as follows. Section~\ref{sec:recursive:formulation} derives the continuous-time set-membership recursion for bounded inputs and outliers. Section~\ref{sec:contractors} introduces the contractor concepts and algebra used in the construction. Section~\ref{sec:filter} defines Contractor Filters, their guaranteed semantics, and the recursive sub-paving compression mechanism. Section~\ref{sec:applications} presents the range-only and terrain-based evaluation and the comparison with particle filtering. Finally, Section~\ref{sec:conclusion} summarizes the main results and discusses future extensions.

\section{Recursive formulation of the feasible states}
\label{sec:recursive:formulation}

We first define the continuous-time feasible states set independently of any interval approximation, then derive the nominal and outlier-robust recursions.
With bounded inputs, exact feasibility depends on the complete state and input trajectories. Projecting this information onto a current state set loses the association between a state and the input history that made it feasible. This distinction matters in the robust case.

\subsection{Tubes}
\label{sec:tubes}

Denote by $\mathcal{F}(\mathbb{R}\rightarrow\mathbb{R}^{n})$ the set of $\mathbb{R}^{n}$-valued trajectories. A tube~\cite{kurzhanski_theory_1993} is a set-valued map
\[
\mathbb{X}(\cdot):\mathbb{R}\rightarrow\mathcal{P}(\mathbb{R}^{n})
\]
that represents the set of trajectories satisfying
\begin{equation}
	\label{eq:tube:membership}
	\mathbf{x}(\cdot)\in\mathbb{X}(\cdot)
	\quad\Longleftrightarrow\quad
	\forall t,\;
	\mathbf{x}(t)\in\mathbb{X}(t).
\end{equation}
Each slice $\mathbb{X}(t)$ encloses the admissible values at time $t$; the tube is pointwise and does not by itself encode temporal relations between slices. For example, the constant tube $\mathbb{A}(t)=[-1,1]$ contains every trajectory remaining in $[-1,1]$, but does not distinguish the subset of nondecreasing trajectories, since $\dot a(t)\geqslant0$ couples values at different times. Such temporal relations are imposed separately by the dynamics or by tube contractors~\cite{RohouTubeval}.

\subsection{Exact sequence without considering outliers}
\label{sec:exact:nooutliers}

At each measurement time $t_k$, we denote by $\mathbb{X}^{-}_{t_k}$ the feasible prior immediately before the update, and by $\mathbb{X}^{+}_{t_k}$ the feasible set after the update. In the nominal case,
\begin{equation}
	\label{eq:nominal:posterior:set}
	\mathbb{X}^{+}_{t_k}
	=
	\mathbb{X}^{-}_{t_k}
	\cap
	\mathbf{g}_{k}^{-1}(\mathbb{Y}_{k}).
\end{equation}
If no initialization information is available, the prior is simply $\mathbb{X}^{-}_{t_0}=\mathbb{R}^{n}$; no artificial finite initial box is required. Subsequent constraints may naturally reduce this set whenever the available information is informative.

We first consider the nominal case in which every measurement is assumed consistent. For a discrete-time system, recursive set-membership estimation can be written as~\cite{Bertsekas1971}
\begin{equation}
	\label{eq:exact:sequence}
	\mathbb{X}_{k+1}
	=\mathbf{f}_{k}(\mathbb{X}_{k})
	\cap(\mathbf{f}_{k}\circ\mathbf{g}_{k}^{-1})(\mathbb{Y}_{k}).
\end{equation}

For the continuous-time system~\eqref{eq:system}, consider an initial state $\mathbf{x}_{1}$ at time $t_{1}$. For a fixed input trajectory $\mathbf{u}(\cdot)$, the state at $t_{2}$ is
\begin{equation}
	\label{eq:flow}
	\mathbf{x}_{2}
	=
	\bm{\Phi}_{t_{1},t_{2}}^{\mathbf{u}(\cdot)}
	(\mathbf{x}_{1}),
\end{equation}
where $\bm{\Phi}_{t_{1},t_{2}}^{\mathbf{u}(\cdot)}$ denotes the flow of the ODE over $[t_{1},t_{2}]$. Its set extension, or reachable-set map, is
\begin{equation}
	\label{eq:reachable:set}
	\bm{\Phi}_{t_1,t_2}^{\mathbf{u}(\cdot)}(\mathbb{X}_1)
	=
	\left\{
	\bm{\Phi}_{t_1,t_2}^{\mathbf{u}(\cdot)}(\mathbf{x}_1)
	\mid \mathbf{x}_1\in\mathbb{X}_1
	\right\}.
\end{equation}

For a fixed input trajectory $\mathbf{u}(\cdot)$, assume that solutions are
unique on $[t_1,t_2]$. The corresponding flow
$\bm{\Phi}_{t_1,t_2}^{\mathbf{u}(\cdot)}$ is then injective: two distinct
initial states cannot reach the same state at time $t_2$ under the same input. An injective map preserves intersections; hence, for any $\mathbb{X}_1^a,\mathbb{X}_1^b\subseteq\mathbb{R}^n$,
\begin{equation}
	\label{eq:automorphism}
	\bm{\Phi}_{t_1,t_2}^{\mathbf{u}(\cdot)}
	(\mathbb{X}_1^a\cap\mathbb{X}_1^b)
	=
	\bm{\Phi}_{t_1,t_2}^{\mathbf{u}(\cdot)}(\mathbb{X}_1^a)
	\cap
	\bm{\Phi}_{t_1,t_2}^{\mathbf{u}(\cdot)}(\mathbb{X}_1^b).
\end{equation}

Using~\eqref{eq:automorphism}, the known-input recursion reduces to the propagation of the corrected set $\mathbb{X}^{+}_{t_k}$ defined in~\eqref{eq:nominal:posterior:set}:
\begin{equation}
	\label{eq:known:input:update:first}
	\mathbb{X}(t)=
	\bm{\Phi}_{t_k,t}^{\mathbf{u}(\cdot)}
	\big(\mathbb{X}^{+}_{t_k}\big).
\end{equation}

We now let the input be uncertain, but bounded by a tube $\mathbb{U}(\cdot)$. The corresponding set-valued flow is defined by
\begin{equation}
	\label{eq:reachable:set:U}
	\bm{\Phi}_{t_1,t_2}^{\mathbb{U}(\cdot)}(\mathbb{X}_1)
	=
	\bigcup_{\mathbf{u}(\cdot)\in\mathbb{U}(\cdot)}
	\bm{\Phi}_{t_1,t_2}^{\mathbf{u}(\cdot)}(\mathbb{X}_1).
\end{equation}

\begin{proposition}[Exact recursion]
\label{th:exact:sequence:nooutliers}

Consider system~\eqref{eq:system} with bounded inputs. Let $\mathbb{U}(\cdot)$ be the input tube and $\mathbb{T}=\{t_0,\ldots,t_{\bar k}\}$ the measurement times.
For $t\in[t_0,t_{\bar k}]$, let $t_k$ be the last measurement time not exceeding $t$. Then the exact feasible state set satisfies
\begin{align}
		\mathbb{X}(t)
		&=
		\bm{\Phi}_{t_k,t}^{\mathbb{U}(\cdot)}
		\big(\mathbb{X}^{+}_{t_k}\big)
		\label{th:exact:sequence:nooutliers:eq}
		\\
		&=
		\bigcup_{\mathbf{u}(\cdot)\in\mathbb{U}(\cdot)}
		\bm{\Phi}_{t_k,t}^{\mathbf{u}(\cdot)}
		\left(
		\mathbb{X}^{-}_{t_k}
		\cap
		\mathbf{g}_k^{-1}(\mathbb{Y}_k)
		\right).
		\label{th:exact:sequence:nooutliers:expanded}
\end{align}
\end{proposition}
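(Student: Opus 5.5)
The plan is to start from a trajectory-level definition of the exact feasible set and show that it factors through the last measurement time. Concretely, $\mathbb{X}(t)$ is the set of values $\mathbf{x}(t)$ over all pairs $(\mathbf{x}(\cdot),\mathbf{u}(\cdot))$ such that:
\begin{itemize}
\item $\mathbf{u}(\cdot)\in\mathbb{U}(\cdot)$;
\item $\mathbf{x}(\cdot)$ solves $\dot{\mathbf{x}}=\mathbf{f}(\mathbf{x},\mathbf{u})$ on $[t_0,t]$;
\item $\mathbf{x}(t_0)\in\mathbb{X}^{-}_{t_0}$;
\item $\mathbf{g}_j(\mathbf{x}(t_j))\in\mathbb{Y}_j$ for every $t_j\leqslant t$.
\end{itemize}
In the same spirit, $\mathbb{X}^{-}_{t_k}$ is the set of values $\mathbf{x}(t_k)$ of trajectories satisfying all constraints strictly before $t_k$, and $\mathbb{X}^{+}_{t_k}$ is given by \eqref{eq:nominal:posterior:set}. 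The argument is then a double inclusion at the level of individual trajectories. Identity \eqref{th:exact:sequence:nooutliers:expanded} follows from \eqref{th:exact:sequence:nooutliers:eq} simply by substituting \eqref{eq:nominal:posterior:set} and the definition \eqref{eq:reachable:set:U}.

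For the inclusion $\mathbb{X}(t)\subseteq\bm{\Phi}_{t_k,t}^{\mathbb{U}(\cdot)}(\mathbb{X}^{+}_{t_k})$, take any feasible pair $(\mathbf{x}(\cdot),\mathbf{u}(\cdot))$ and set $\mathbf{x}_k=\mathbf{x}(t_k)$. The restriction of the pair to $[t_0,t_k)$ satisfies every earlier constraint, so $\mathbf{x}_k\in\mathbb{X}^{-}_{t_k}$. The pair also satisfies the $k$-th measurement, so $\mathbf{x}_k\in\mathbf{g}_k^{-1}(\mathbb{Y}_k)$. Since no measurement lies in $(t_k,t]$, the value $\mathbf{x}(t)$ equals $\bm{\Phi}_{t_k,t}^{\mathbf{u}(\cdot)}(\mathbf{x}_k)$ by \eqref{eq:flow}. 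This value is therefore in the union over $\mathbf{u}(\cdot)\in\mathbb{U}(\cdot)$.

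For the reverse inclusion, take $\mathbf{x}_k\in\mathbb{X}^{+}_{t_k}$ and an input $\mathbf{u}_2(\cdot)\in\mathbb{U}(\cdot)$. By definition of $\mathbb{X}^{-}_{t_k}$, there is a feasible pair $(\mathbf{x}_1(\cdot),\mathbf{u}_1(\cdot))$ on $[t_0,t_k]$ that ends at $\mathbf{x}_k$. Concatenate the two inputs at $t_k$ and the two corresponding trajectories, using $\mathbf{x}_1$ on $[t_0,t_k]$ and $\bm{\Phi}_{t_k,\cdot}^{\mathbf{u}_2(\cdot)}(\mathbf{x}_k)$ afterwards. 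The resulting trajectory is continuous, is a Carathéodory solution of the ODE, and satisfies every measurement up to $t$. Its value at $t$ is $\bm{\Phi}_{t_k,t}^{\mathbf{u}_2(\cdot)}(\mathbf{x}_k)$, which gives the reverse inclusion. An induction on $k$ is not strictly needed with this direct argument. Alternatively, one can proceed inductively: use the case $t=t_k^{-}$ to identify $\mathbb{X}^{-}_{t_k}$ with the propagated previous posterior, which recovers the recursion announced in the statement.

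The main obstacle is the concatenation step. It requires the input tube to be closed under concatenation at $t_k$. This holds because a tube is defined pointwise by \eqref{eq:tube:membership}, so $\mathbf{u}_1$ on $[t_0,t_k)$ spliced with $\mathbf{u}_2$ afterwards still lies in $\mathbb{U}(\cdot)$. One must also admit piecewise inputs, or measurable inputs with Carathéodory solutions, so that the spliced trajectory is a genuine solution. Injectivity, i.e.\ \eqref{eq:automorphism}, is not needed for the statement itself. It matters only for the known-input form \eqref{eq:known:input:update:first}, and it is precisely the property that fails once the union over inputs is taken.
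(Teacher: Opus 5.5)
Your proof is correct. It closes exactly as the paper's does, by substituting \eqref{eq:nominal:posterior:set} into \eqref{eq:reachable:set:U}, but the core step is handled differently.

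The paper simply asserts, ``by definition of feasibility,'' that $\mathbf{x}\in\mathbb{X}(t)$ if and only if $\mathbf{x}=\bm{\Phi}_{t_k,t}^{\mathbf{u}(\cdot)}(\mathbf{x}_k)$ for some $\mathbf{u}(\cdot)\in\mathbb{U}(\cdot)$ and $\mathbf{x}_k\in\mathbb{X}^{+}_{t_k}$. Read this way, the proposition restates the definition, and the fact that the projected set $\mathbb{X}^{+}_{t_k}$ loses nothing about the past is never argued. You instead fix a trajectory-level definition of $\mathbb{X}(t)$ and $\mathbb{X}^{-}_{t_k}$ and derive that characterization. Restriction gives one inclusion, and splicing a feasible past pair ending at $\mathbf{x}_k$ with an arbitrary future input gives the other.

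The splicing step is the real content, and your route makes its hypotheses visible. The input tube is closed under concatenation because \eqref{eq:tube:membership} is a pointwise constraint. The admissible input class must tolerate a switch at $t_k$, which is where Carath\'eodory solutions come in. The nominal measurement constraints decompose time by time. Forward uniqueness is needed so that $\mathbf{x}(t)=\bm{\Phi}_{t_k,t}^{\mathbf{u}(\cdot)}(\mathbf{x}_k)$ in the first inclusion; injectivity, as you say, is not.

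Your argument also explains the contrast drawn in Section~\ref{sec:exact:outliers}. There the $q$-relaxed window couples several measurement times. Whether a spliced trajectory is feasible then depends on which history led to $\mathbf{x}_k$, information the projected prior discards. The paper's version buys brevity; yours buys an actual justification.

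One small fix: restrict the feasible pair to the closed interval $[t_0,t_k]$, not $[t_0,t_k)$, since you use its value at $t_k$.
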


\begin{proof}
Let $\mathbf{x}\in\mathbb{R}^n$. By definition of feasibility, $\mathbf{x}\in\mathbb{X}(t)$ if and only if there exist $\mathbf{u}(\cdot)\in\mathbb{U}(\cdot)$ and $\mathbf{x}_k\in\mathbb{X}^{+}_{t_k}$ such that $\mathbf{x}= \bm{\Phi}_{t_k,t}^{\mathbf{u}(\cdot)}(\mathbf{x}_k)$. Therefore,
\[
\mathbf{x}\in\mathbb{X}(t) \quad\Longleftrightarrow\quad \mathbf{x}\in \bigcup_{\mathbf{u}(\cdot)\in\mathbb{U}(\cdot)} \bm{\Phi}_{t_k,t}^{\mathbf{u}(\cdot)} \big(\mathbb{X}^{+}_{t_k}\big).
\]
Using~\eqref{eq:reachable:set:U} and $\mathbb{X}^{+}_{t_k} = \mathbb{X}^{-}_{t_k}\cap\mathbf{g}_k^{-1}(\mathbb{Y}_k)$ from \eqref{eq:nominal:posterior:set} gives \eqref{th:exact:sequence:nooutliers:eq}, \eqref{th:exact:sequence:nooutliers:expanded}.
\end{proof}

Proposition~\ref{th:exact:sequence:nooutliers} is exact in the set-theoretic sense: it characterizes precisely the states reachable from the corrected set under at least one admissible input trajectory. Factorizing the corrected set into independently propagated terms under uncertain inputs loses this common-input dependency and yields only an outer enclosure.

\begin{proposition}[Factorized enclosure]
\label{prop:factorized:bounded:input}

For a non-degenerate input tube $\mathbb{U}(\cdot)$, the exact recursion admits the outer enclosure
\begin{equation}
		\label{eq:filter:nooutliers:multi:u}
		\mathbb{X}(t)
		\subseteq
		\bm{\Phi}_{t_k,t}^{\mathbb{U}(\cdot)}(\mathbb{X}^{-}_{t_k})
		\cap
		(\bm{\Phi}_{t_k,t}^{\mathbb{U}(\cdot)}
		\circ\mathbf{g}_{k}^{-1})(\mathbb{Y}_{k}).
\end{equation}
The inclusion may be strict even when every fixed-input flow is injective.
\end{proposition}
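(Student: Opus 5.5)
The proof has two parts: the inclusion, and a counterexample showing it can be strict.

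\emph{Inclusion.} I would start from the exact characterization of Proposition~\ref{th:exact:sequence:nooutliers}, equation~\eqref{th:exact:sequence:nooutliers:expanded}. Take $\mathbf{x}\in\mathbb{X}(t)$. Then there exist $\mathbf{u}(\cdot)\in\mathbb{U}(\cdot)$ and $\mathbf{x}_k\in\mathbb{X}^{-}_{t_k}\cap\mathbf{g}_k^{-1}(\mathbb{Y}_k)$ with $\mathbf{x}=\bm{\Phi}_{t_k,t}^{\mathbf{u}(\cdot)}(\mathbf{x}_k)$. Since $\mathbf{x}_k\in\mathbb{X}^{-}_{t_k}$, we get $\mathbf{x}\in\bm{\Phi}_{t_k,t}^{\mathbf{u}(\cdot)}(\mathbb{X}^{-}_{t_k})\subseteq\bm{\Phi}_{t_k,t}^{\mathbb{U}(\cdot)}(\mathbb{X}^{-}_{t_k})$ by~\eqref{eq:reachable:set:U}. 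Likewise $\mathbf{x}_k\in\mathbf{g}_k^{-1}(\mathbb{Y}_k)$ gives $\mathbf{x}\in(\bm{\Phi}_{t_k,t}^{\mathbb{U}(\cdot)}\circ\mathbf{g}_k^{-1})(\mathbb{Y}_k)$. This is just the elementary fact that the image of an intersection is contained in the intersection of the images, applied inside the union over inputs. It is also where the argument loses exactness: the union over $\mathbf{u}(\cdot)$ does not commute with intersection. Injectivity~\eqref{eq:automorphism} only gives equality for a single fixed input, whereas the right-hand side allows the two terms to be witnessed by different inputs $\mathbf{u}^a(\cdot)\neq\mathbf{u}^b(\cdot)$.

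\emph{Strictness.} I would give a scalar counterexample with injective fixed-input flows. Take $n=1$, $\dot x=u$, $\mathbb{U}(\cdot)\equiv[-1,1]$ (non-degenerate), $t_k=0$, $t=1$, so that $\bm{\Phi}^{\mathbf{u}(\cdot)}_{0,1}(x)=x+\int_0^1u$ is a translation and hence injective. Choose $\mathbb{X}^{-}_{0}=\{0\}$ and $\mathbf{g}_k^{-1}(\mathbb{Y}_k)=\{1\}$, for instance with $g_k(x)=x$ and $\mathbb{Y}_k=\{1\}$; thin intervals would work equally well if one prefers non-degenerate sets. Then $\mathbb{X}^{+}_{0}=\emptyset$, so $\mathbb{X}(1)=\emptyset$. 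On the other hand, $\bm{\Phi}^{\mathbb{U}}_{0,1}(\{0\})=[-1,1]$ and $\bm{\Phi}^{\mathbb{U}}_{0,1}(\{1\})=[0,2]$, whose intersection is $[0,1]\neq\emptyset$. So the inclusion is strict. To avoid relying on the empty case, one could instead take $\mathbb{X}^{-}_0=[0,1]$ and $\mathbf{g}_k^{-1}(\mathbb{Y}_k)=\{1\}\cup\{3\}$. Then $\mathbb{X}(1)=[0,2]$, while the factorized set is $[-1,2]\cap([0,2]\cup[2,4])=[0,2]$, which is equal rather than strictly larger. For this reason I would keep the disjoint example as the cleanest.

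The main obstacle is just being careful that the counterexample is consistent with the paper's setting: well-posed flows, uniqueness of solutions, and a genuinely non-degenerate tube. The point to make explicit is that strictness comes from the decoupling of the input witnesses, not from any failure of injectivity.
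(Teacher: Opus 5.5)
Your proof is correct, and it differs from the paper's in two ways. For the inclusion, the paper sets $\mathbb A=\mathbb X^-_{t_k}$ and $\mathbb B=\mathbf g_k^{-1}(\mathbb Y_k)$. It first uses injectivity of each fixed-input flow to rewrite $\bigcup_{\mathbf u(\cdot)}\bm{\Phi}^{\mathbf u(\cdot)}_{t_k,t}(\mathbb A\cap\mathbb B)$ as $\bigcup_{\mathbf u(\cdot)}[\bm{\Phi}^{\mathbf u(\cdot)}_{t_k,t}(\mathbb A)\cap\bm{\Phi}^{\mathbf u(\cdot)}_{t_k,t}(\mathbb B)]$. It then bounds this union of intersections by the intersection of the unions. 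You instead argue directly that the image of an intersection lies in the intersection of the images. That holds for any map, so your inclusion needs neither injectivity nor non-degeneracy of the tube.

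What the paper's detour buys is a diagnosis rather than a proof. For a fixed input nothing is lost, so the whole gap lies in swapping the union over inputs with the intersection. This is why non-degeneracy is needed for strictness, a point you also make in words. For strictness itself, the paper gives no witness, only the remark that the two factors may be realized by different admissible inputs. Your translation example ($\dot x=u$, $\mathbb U\equiv[-1,1]$) actually proves the ``may be strict'' claim, so on this point your argument is more complete than the paper's.

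One refinement. In your example $\mathbb X^+_0=\emptyset$, so the prior and the measurement are mutually inconsistent: no true trajectory exists, and the measurement would be an outlier. That is outside the nominal setting of this section, where every measurement is assumed consistent. A witness with consistent data is easy to build. Take $\mathbb X^-_0=\{0,10\}$, $g_k(x)=x$, and $\mathbb Y_k=\{1,10\}$. Then $\mathbb X(1)=[9,11]$, whereas the factorized set is $([-1,1]\cup[9,11])\cap([0,2]\cup[9,11])=[0,1]\cup[9,11]$.

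Your second attempt failed for a specific reason. For translations by $[-1,1]$, inflation commutes with the intersection of overlapping intervals. Extra points can therefore come only from pieces that are disjoint at $t_k$ but whose inflations overlap. In your attempt the only such pair, $[0,1]$ and $\{3\}$, produces just the point $2$, which already lies in $\mathbb X(1)=[0,2]$.
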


\begin{proof}
Set $\mathbb{A}=\mathbb{X}^{-}_{t_k}$ and $\mathbb{B}=\mathbf{g}_k^{-1}(\mathbb{Y}_k)$. Under injective fixed-input flows, Eq.~\eqref{eq:automorphism} yields
\begin{equation}
\begin{aligned}
			&\bigcup_{\mathbf{u}(\cdot)\in\mathbb{U}(\cdot)}
			\bm{\Phi}^{\mathbf{u}(\cdot)}_{t_k,t}
			(\mathbb{A}\cap\mathbb{B})
			\\
			&=
			\bigcup_{\mathbf{u}(\cdot)\in\mathbb{U}(\cdot)}
			\left[
			\bm{\Phi}^{\mathbf{u}(\cdot)}_{t_k,t}(\mathbb{A})
			\cap
			\bm{\Phi}^{\mathbf{u}(\cdot)}_{t_k,t}(\mathbb{B})
			\right]
			\\
			&\subseteq
			\left[
			\bigcup_{\mathbf{u}(\cdot)\in\mathbb{U}(\cdot)}
			\bm{\Phi}^{\mathbf{u}(\cdot)}_{t_k,t}(\mathbb{A})
			\right]\cap
			\left[
			\bigcup_{\mathbf{v}(\cdot)\in\mathbb{U}(\cdot)}
			\bm{\Phi}^{\mathbf{v}(\cdot)}_{t_k,t}(\mathbb{B})
			\right].
\end{aligned}
\end{equation}

The last intersection may use different admissible input trajectories in its two factors, whereas feasibility requires one common trajectory; this is exactly the relaxation in~\eqref{eq:filter:nooutliers:multi:u}.
\end{proof}

Hence~\eqref{th:exact:sequence:nooutliers:eq} remains the exact reference, whereas~\eqref{eq:filter:nooutliers:multi:u} is a convenient outer enclosure.

\subsection{Recursive set-membership estimation with outliers}
\label{sec:exact:outliers}

A measurement set $\mathbb{Y}_k$ is an outlier when it does not contain the actual output, namely when $\mathbf{g}_k(\mathbf{x}(t_k))\notin\mathbb{Y}_k$. Outlier rejection has been considered in static bounded-error estimation problems~\cite{Sliwka09,Carbonnel2014} and in recursive state estimation settings~\cite{Jaulin09_Automatica_Robust,MeslemHably2020Outliers}.

We retain the historical MNO terminology of~\cite{Jaulin09_Automatica_Robust}, which in the present context denotes the \emph{Maximum Number of Outliers}. Operationally, $q$ is the maximum number of violated measurement constraints allowed in any window of $\ell$ measurements, with $0\leqslant q<\ell$; equivalently, at least $\ell-q$ constraints in each window must be satisfied.

For the window ending at index $k$, this is written as
\begin{equation}
	\label{eq:mno:conjunction}
	\mathop{\bigwedge\nolimits^{\{q\}}}_{i\in\{0,\ldots,\ell-1\}}
	\bigg(
	\mathbf{g}_{k-i}
	\big(\mathbf{x}(t_{k-i})\big)
	\in
	\mathbb{Y}_{k-i}
	\bigg),
\end{equation}
where $\bigwedge^{\{q\}}$ denotes a $q$-relaxed conjunction. For example, allowing one outlier among three measurements gives $ \bigwedge\nolimits^{\{1\}}(P_1,P_2,P_3) \quad\Longleftrightarrow\quad (P_1\wedge P_2) \vee (P_1\wedge P_3) \vee (P_2\wedge P_3), $ where \(P_i\) denotes satisfaction of the \(i\)-th measurement constraint.

At the set level, the corresponding operator is the $q$-relaxed intersection.

\begin{dfn}
Consider $\ell$ sets $\mathbb{X}_1,\ldots,\mathbb{X}_\ell\subset\mathbb{R}^n$. Their $q$-relaxed intersection contains the vectors belonging to at least $\ell-q$ of these sets:
\begin{equation}
		\label{eq:qintersection:def}
		\mathbf{x}\in
		\mathop{\bigcap\nolimits^{\{q\}}}_{i\in\{1,\ldots,\ell\}}\mathbb{X}_i
		\quad\Longleftrightarrow\quad
		\#\{i:\mathbf{x}\in\mathbb{X}_i\}
		\geqslant\ell-q.
\end{equation}
In particular, $\mathop{\bigcap\nolimits^{\{0\}}}_{i\in\{1,\ldots,\ell\}}\mathbb{X}_i =\bigcap_{i=1}^{\ell}\mathbb{X}_i$.
\end{dfn}

Using this notation, the discrete-time robust recursion of \cite[Theorem~2.1]{Jaulin09_Automatica_Robust} is
\begin{equation}
	\label{eq:exact:sequence:outliers}
\begin{aligned}
		\mathbb{X}_{k+1}
		={}\mathbf{f}_k(\mathbb{X}_k)
		\cap
		\mathop{\bigcap\nolimits^{\{q\}}}_{i\in\{0,\ldots,\ell-1\}}
		\left(
		\mathbf{f}_k^i
		\circ
		\mathbf{g}_{k-i}^{-1}
		\right)
		(\mathbb{Y}_{k-i}),
\end{aligned}
\end{equation}
where $\mathbf{f}_k^i=\mathbf{f}_k\circ\mathbf{f}_{k-1}\circ\cdots \circ\mathbf{f}_{k-i}$.

With bounded inputs, the exact robust feasible set is defined by the full state and input trajectories. Since a recursion storing only $\mathbb{X}^{-}_{t_k}$ and $\mathbb{U}(\cdot)$ does not retain the full dependency between the current state and its past input trajectory, we derive guaranteed state-space enclosures rather than an exact projected recursion.

\begin{theorem}[Robust enclosure]
\label{th:exact:sequence:outliers}

Let $t_k$ be the last measurement time not exceeding $t$, with $k\geqslant\ell-1$ so that a complete window of $\ell$ measurements is available. Assume the MNO condition with at most $q$ outliers in this window. Then the exact feasible state set satisfies
\begin{equation}
		\label{eq:robust:common:u}
\begin{aligned}
			\mathbb{X}(t)
			\subseteq{}&
			\bigcup_{\mathbf{u}(\cdot)\in\mathbb{U}(\cdot)}
			\Bigg[
			\bm{\Phi}_{t_k,t}^{\mathbf{u}(\cdot)}
			(\mathbb{X}^{-}_{t_k})
			\\
			&\qquad\cap
			\mathop{\bigcap\nolimits^{\{q\}}}_{i\in\{0,\ldots,\ell-1\}}
			\left(
			\bm{\Phi}_{t_{k-i},t}^{\mathbf{u}(\cdot)}
			\circ\mathbf{g}_{k-i}^{-1}
			\right)(\mathbb{Y}_{k-i})
			\Bigg].
\end{aligned}
\end{equation}

The same admissible input trajectory $\mathbf{u}(\cdot)$ is used in every flow
term of~\eqref{eq:robust:common:u}, restricted to the corresponding time
interval. Replacing these common-input flows by independently propagated
tube-level flows gives the directly usable factorized enclosure
\begin{equation}
		\label{th:exact:sequence:outliers:eq}
\begin{aligned}
			\mathbb{X}(t)
			\subseteq{}&
			\bm{\Phi}_{t_k,t}^{\mathbb{U}(\cdot)}
			(\mathbb{X}^{-}_{t_k})
			\\
			&\cap
			\mathop{\bigcap\nolimits^{\{q\}}}_{i\in\{0,\ldots,\ell-1\}}
			\left(
			\bm{\Phi}_{t_{k-i},t}^{\mathbb{U}(\cdot)}
			\circ
			\mathbf{g}_{k-i}^{-1}
			\right)
			(\mathbb{Y}_{k-i}).
\end{aligned}
\end{equation}
\end{theorem}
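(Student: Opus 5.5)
The plan is a pointwise argument: fix a feasible state, exhibit the witness input and trajectory, and check membership in each factor of \eqref{eq:robust:common:u}. Then \eqref{th:exact:sequence:outliers:eq} follows by a monotonicity step of the same kind as in Proposition~\ref{prop:factorized:bounded:input}.

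\emph{Step 1 (witness).} Let $\mathbf{x}\in\mathbb{X}(t)$. By definition of the robust feasible set, there exist an admissible input $\mathbf{u}(\cdot)\in\mathbb{U}(\cdot)$ and a trajectory $\mathbf{x}(\cdot)$ of \eqref{eq:system} driven by $\mathbf{u}(\cdot)$ with the following properties:
\begin{itemize}
\item $\mathbf{x}(t)=\mathbf{x}$;
\item $\mathbf{x}(t_k)\in\mathbb{X}^{-}_{t_k}$;
\item $\mathbf{x}(\cdot)$ satisfies the MNO condition \eqref{eq:mno:conjunction} on the window $\{k-\ell+1,\ldots,k\}$.
\end{itemize}
The condition $\mathbf{x}(t_k)\in\mathbb{X}^{-}_{t_k}$ holds because the prior at $t_k$ encloses every state reached by a trajectory consistent with the information available before $t_k$. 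The requirement $k\geqslant\ell-1$ only guarantees that all indices in the window are defined. I fix this single trajectory and input for the rest of the argument.

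\emph{Step 2 (membership in each factor, same input).} By uniqueness of solutions for the fixed input, $\mathbf{x}=\bm{\Phi}_{t_k,t}^{\mathbf{u}(\cdot)}(\mathbf{x}(t_k))$, so $\mathbf{x}\in\bm{\Phi}_{t_k,t}^{\mathbf{u}(\cdot)}(\mathbb{X}^{-}_{t_k})$. Now let $I$ be the set of window indices $i$ with $\mathbf{g}_{k-i}(\mathbf{x}(t_{k-i}))\in\mathbb{Y}_{k-i}$. The MNO assumption gives $\#I\geqslant\ell-q$. For every $i\in I$, we have $\mathbf{x}(t_{k-i})\in\mathbf{g}_{k-i}^{-1}(\mathbb{Y}_{k-i})$. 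By the semigroup property of the flow, $\mathbf{x}=\bm{\Phi}_{t_{k-i},t}^{\mathbf{u}(\cdot)}(\mathbf{x}(t_{k-i}))$, using the restriction of the same $\mathbf{u}(\cdot)$ to $[t_{k-i},t]$. Hence $\mathbf{x}$ lies in at least $\ell-q$ of the sets $(\bm{\Phi}_{t_{k-i},t}^{\mathbf{u}(\cdot)}\circ\mathbf{g}_{k-i}^{-1})(\mathbb{Y}_{k-i})$. By Definition \eqref{eq:qintersection:def}, $\mathbf{x}$ belongs to their $q$-relaxed intersection. Taking the intersection with the first factor and then the union over $\mathbf{u}(\cdot)$ yields \eqref{eq:robust:common:u}.

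\emph{Step 3 (factorization).} Each fixed-input image is contained in the corresponding tube-level image, by \eqref{eq:reachable:set:U}: $\bm{\Phi}^{\mathbf{u}(\cdot)}_{s,t}(\mathbb{S})\subseteq\bm{\Phi}^{\mathbb{U}(\cdot)}_{s,t}(\mathbb{S})$. The $q$-relaxed intersection is monotone in each argument, since enlarging sets can only increase the count $\#\{i:\mathbf{x}\in\mathbb{X}_i\}$, and ordinary intersection is monotone too. So the bracket for each fixed $\mathbf{u}(\cdot)$ is contained in the right-hand side of \eqref{th:exact:sequence:outliers:eq}, which does not depend on $\mathbf{u}(\cdot)$. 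The union over $\mathbf{u}(\cdot)$ is therefore contained in it as well.

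\emph{Main obstacle.} I expect the delicate point to be Step 1: stating precisely what ``exact feasible set'' means in the robust setting and justifying $\mathbf{x}(t_k)\in\mathbb{X}^{-}_{t_k}$. Whether this holds depends on how $\mathbb{X}^{-}_{t_k}$ is defined for outlier-tolerant feasibility. I will take it as an outer enclosure of the states at $t_k$ of all trajectories admissible under the MNO constraints up to $t_k$. This is exactly why the result is an inclusion rather than an equality. The semigroup and restriction properties of flows in Step 2 are routine given uniqueness of solutions.
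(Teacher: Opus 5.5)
Your proposal is correct and follows the paper's proof essentially step for step. Like the paper, you fix one witness input and trajectory for $\mathbf{x}$, use that same input to place $\mathbf{x}$ in the propagated prior and in at least $\ell-q$ of the propagated observation sets, and then obtain the factorized enclosure from $\bm{\Phi}^{\mathbf{u}(\cdot)}(\mathbb{A})\subseteq\bm{\Phi}^{\mathbb{U}(\cdot)}(\mathbb{A})$ together with monotonicity of ordinary and $q$-relaxed intersections. Your version is slightly more explicit, notably on the first factor and the semigroup step, and the caveat you flag about the robust prior $\mathbb{X}^{-}_{t_k}$ is exactly what the paper leaves implicit in its phrase ``by feasibility''.
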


\begin{proof}
Take any $\mathbf{x}\in\mathbb{X}(t)$. By feasibility, there exists one admissible input trajectory $\mathbf{u}(\cdot)\in\mathbb{U}(\cdot)$ and one associated state trajectory such that $\mathbf{x}(t_k)\in\mathbb{X}^{-}_{t_k}$ and at least $\ell-q$ measurement constraints in the current window are satisfied. For each satisfied constraint at time $t_{k-i}$, propagation with this same input trajectory places $\mathbf{x}$ in $(\bm{\Phi}_{t_{k-i},t}^{\mathbf{u}(\cdot)} \circ\mathbf{g}_{k-i}^{-1})(\mathbb{Y}_{k-i})$. Hence $\mathbf{x}$ belongs to the right-hand side of \eqref{eq:robust:common:u}.

For every set $\mathbb A$ and every admissible $\mathbf u(\cdot)\in\mathbb U(\cdot)$, $\bm\Phi^{\mathbf u(\cdot)}(\mathbb A)\subseteq\bm\Phi^{\mathbb U(\cdot)}(\mathbb A)$. Monotonicity of ordinary and $q$-relaxed intersections therefore gives the factorized enclosure~\eqref{th:exact:sequence:outliers:eq}.
\end{proof}

Equation~\eqref{eq:robust:common:u} already loses the association between the projected prior state and the past input history that made it feasible. The factorized form~\eqref{th:exact:sequence:outliers:eq} adds a second relaxation because its propagated factors may use different admissible trajectories from the same input tube. This factorized enclosure is the one implemented by the CF.

Fig.~\ref{fig:exact:sequence:outliers} illustrates the robust enclosure.
\begin{figure}[h]
	\centering
	\includegraphics[width=\linewidth]{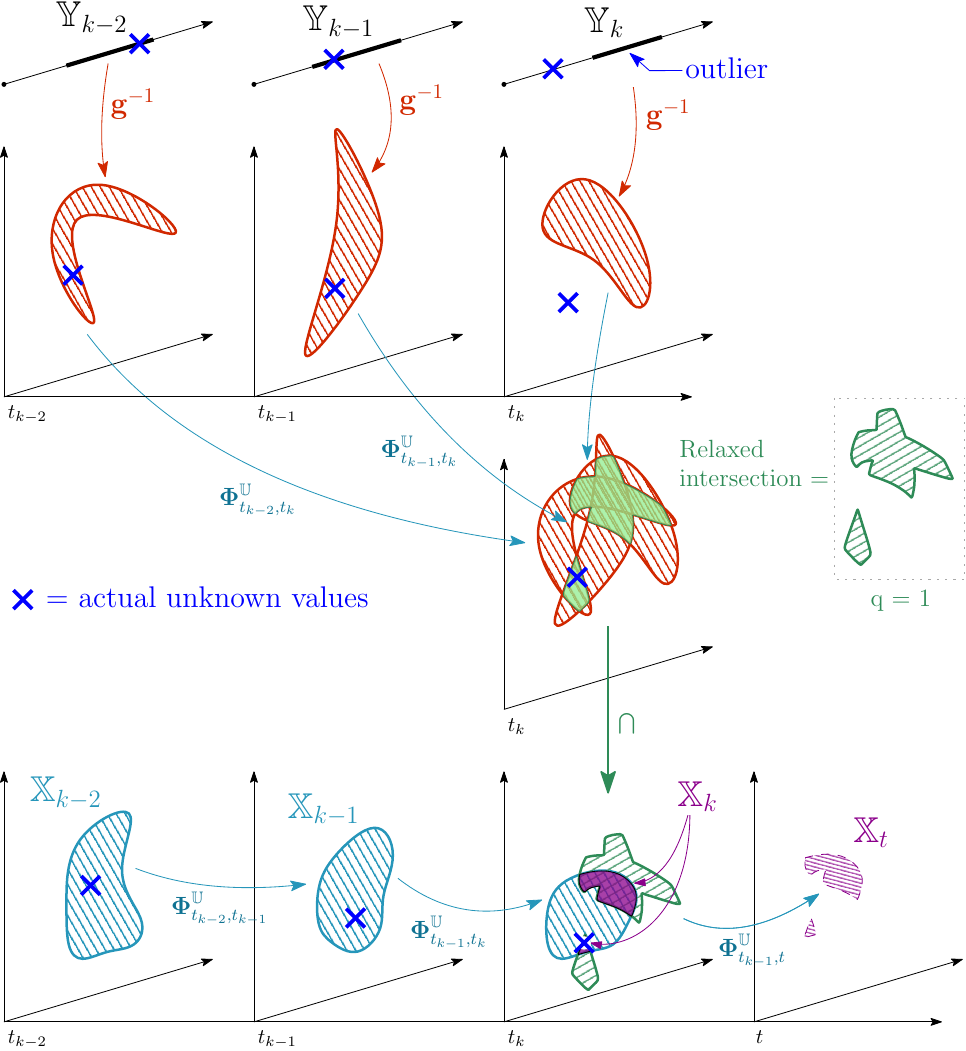}
	\caption{Illustration of
		Theorem~\ref{th:exact:sequence:outliers}: robust state-space enclosure in
		the presence of an outlier. Here, $\ell=3$ and $q=1$.}
	\label{fig:exact:sequence:outliers}
\end{figure}

\subsection{Flow of nonlinear systems}
\label{sec:flow}

The previous results are written in terms of exact flow maps. A numerical CF requires a computable flow inclusion containing every trajectory generated by the initial set and input tube. This requirement is essential for preserving the set-membership guarantee through the prediction step. We propose to exploit the cascade structure available in many robotics models. General validated ODE integration remains available when such a structure does not exist.

\subsubsection{Strict triangular systems}

Recall the nonlinear time-invariant system $\dot{\mathbf{x}}=\mathbf{f}(\mathbf{x},\mathbf{u})$. An analytical flow is generally unavailable. A useful special case is a strict triangular system, for which the state variables can be ordered as
\begin{equation}
	\label{eq:triangular}
\begin{aligned}
		\dot{x}_1 &= f_1(\mathbf{u}),\\
		\dot{x}_2 &= f_2(x_1,\mathbf{u}),\\
		&\ \vdots\\
		\dot{x}_n &= f_n(x_1,\ldots,x_{n-1},\mathbf{u}).
\end{aligned}
\end{equation}

In this causal structure,  each equation depends only on the input and on state components that can already be propagated. This cascade permits sequential upstream-to-downstream propagation without solving a mutually coupled fixed-point problem over all state components~\cite{RohouTubint}.

Consider, for example, a wheeled robot with position $(x_1,x_2)^\intercal$, heading $x_3$, and speed $x_4$:
\begin{equation}
	\label{eq:triangular:wheeled}
\begin{aligned}
		\dot{x}_3 &= u_1,
		&\qquad
		\dot{x}_4 &= u_2,\\
		\dot{x}_1 &= x_4\cos(x_3),
		&
		\dot{x}_2 &= x_4\sin(x_3).
\end{aligned}
\end{equation}
With ordering $(x_3,x_4,x_1,x_2)$, heading and speed are propagated before position, yielding the cascade in Fig.~\ref{fig:causal:kinematic:chain}.

\begin{figure}[h]
	\centering
	\includegraphics[width=0.8\linewidth]{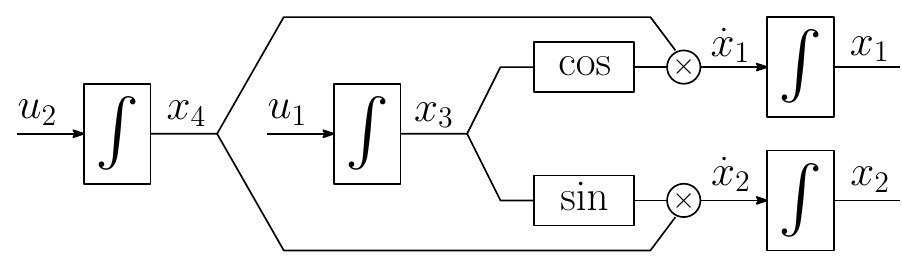}
	\caption{Cascade representation of the wheeled-robot dynamics
		in~\eqref{eq:triangular:wheeled}.}
	\label{fig:causal:kinematic:chain}
\end{figure}

For a strict triangular system, interval propagation follows the same upstream-to-downstream order. When the associated constraint graph is a tree and the elementary propagation operators are exact, constraint propagation can recover the tightest interval enclosures compatible with the constraints \cite{Montanari91}. In general, interval dependency may still introduce over-approximation, but no fixed-point iteration is needed solely to resolve algebraic coupling between state equations.

When the dynamics cannot be put in strict triangular form, general validated ODE solvers can instead be used, based for example on interval Runge-Kutta schemes, Taylor methods, or Lohner-type techniques \cite{Nedialkov:99,dit2016validated,Revol05,wilczak2012cr}. Their use in bounded-error state estimation has been studied notably in~\cite{JaulinVoltera02}. These methods preserve the required inclusion guarantee, although tightness and computational cost may deteriorate for wide initial sets or uncertain inputs. The triangular implementation is therefore not a requirement of the CF architecture, but an efficient validated-flow implementation when the model permits it.

\subsubsection{The inputs are enclosed in a tube}
\label{sec:flow:input:tubes}

Let $\mathbb{IR}^n$ denote the set of interval boxes in $\mathbb R^n$.
On $T_k=[t_k,t_{k+1}]$, let $[\mathbf{u}](\cdot):T_k\rightarrow\mathbb{IR}^{n_u}$ be a tube of boxes satisfying $\mathbb{U}(t)\subseteq[\mathbf{u}](t)$. Tube arithmetic and validated tube integration can then be used to construct reliable flow inclusions~\cite{RohouTubint}. In particular, tube integration returns an interval vector enclosing the integral of every trajectory contained in the tube~\cite{aubry:autom:13}. For the wheeled robot of~\eqref{eq:triangular:wheeled}, define $\mathbf{p}=(x_1,x_2)^\intercal$, $\mathbf{q}=(x_3,x_4)^\intercal$, so that $\mathbf{x}=(\mathbf{p},\mathbf{q})$ and $\mathbf{u}=(u_1,u_2)^\intercal$. The strict triangular structure first integrates $\mathbf{u}$ to obtain the heading/speed increment, and then uses this result to integrate the planar velocity. 

\begin{algorithm}[t]
	\caption{Flow inclusion for the wheeled-robot model}
	\label{alg:wheeled:flow}
	\small
\begin{algorithmic}[1]

		\State \textbf{Input:}
		$T_k=[t_k,t_{k+1}]$,
		$[\mathbf{u}](\cdot):T_k\rightarrow\mathbb{IR}^{2}$

		\State \textbf{Output:}
		$[\bm{\Phi}]^{[\mathbf{u}](\cdot)}_{t_k,t}:\mathbb{IR}^4\to\mathbb{IR}^4$,
		$\quad t\in T_k$

		\State
		$[\Delta\mathbf{q}](\tau)
		\gets
		\displaystyle\int_{t_k}^{\tau}[\mathbf{u}](s)\,ds$,
		$\quad \tau\in T_k$

		\State
		$\displaystyle
\begin{aligned}
			{}[\Delta\mathbf{p}]([\mathbf{q}],t)
			\gets {}&
			\int_{t_k}^{t}
			\bigl([q_2]+[\Delta q_2](\tau)\bigr)\\
			&\qquad\cdot
\begin{pmatrix}
				\cos\!\bigl([q_1]+[\Delta q_1](\tau)\bigr)\\
				\sin\!\bigl([q_1]+[\Delta q_1](\tau)\bigr)
\end{pmatrix}
			\,d\tau
\end{aligned}$

		\State \Return
		$\displaystyle
\begin{aligned}
			{}[\bm{\Phi}]^{[\mathbf{u}](\cdot)}_{t_k,t}
			:\;&[\mathbf{p}]\times[\mathbf{q}]\mapsto\\
			&
			\bigl([\mathbf{p}]+[\Delta\mathbf{p}]([\mathbf{q}],t)\bigr)
			\times
			\bigl([\mathbf{q}]+[\Delta\mathbf{q}](t)\bigr)
\end{aligned}$

\end{algorithmic}
\end{algorithm}

Algorithm~\ref{alg:wheeled:flow} constructs the corresponding inclusion function for every $t\in T_k$.
The returned object is not a single propagated box, but an inclusion function parameterized by the input tube $[\mathbf{u}](\cdot)$, which maps each provided state box $[\mathbf{x}]$ to an enclosure of its reachable states. Therefore, for every $t\in T_k$, every admissible input trajectory $\mathbf{u}(\cdot)$ belonging to $\mathbb{U}(\cdot)$, and every $\mathbf{x}(t_k)\in[\mathbf{x}]$,
\begin{equation}
	\bm{\Phi}_{t_k,t}^{\mathbf{u}(\cdot)}
	\bigl(\mathbf{x}(t_k)\bigr)
	\in
	[\bm{\Phi}]^{[\mathbf{u}](\cdot)}_{t_k,t}
	([\mathbf{x}]).
	\label{eq:flow:inclusion}
\end{equation}
Since $[\Delta\mathbf{q}](\cdot)$ is independent of the source box, it can be
precomputed once over $T_k$ and reused in all flow evaluations. In the general
case, the planar increment $[\Delta\mathbf{p}]([\mathbf{q}],t)$ must still be
evaluated for each source box because it depends on the initial heading and
speed. This triangular structure nevertheless makes the computation efficient.

The same construction can exploit a guaranteed tube on intermediate components,
for instance from bounded continuous measurements of heading or speed. Applying
the differential contractor $\mathcal{C}_{d/dt}$ of
~\cite{RohouTubint,linobs} to $\dot{\mathbf q}=\mathbf u$ contracts
$[\mathbf q](\cdot)$ using the bounded input tube $[\mathbf u](\cdot)$.
Since $\mathbf q$ is upstream of the planar dynamics, this contraction directly
tightens the displacement integral and the resulting flow enclosure. It also
removes the remaining source-box dependence from the displacement computation:
the contracted tube and corresponding displacement integrals can be precomputed
once over $T_k$. Subsequent flow evaluations then require only interval-vector
intersections and an addition. This conditioned refinement is used in the
experiments below.

The filtering construction of Section~\ref{sec:filter} also uses validated inclusions of backward flows. On one measurement interval $T_k$, for $t\in T_k$, we denote by $[\bm{\Phi}]^{[\mathbf{u}](\cdot)}_{t,t_k}$ any backward-flow inclusion satisfying
\begin{equation}
	\label{eq:backward:flow:inclusion}
	\bigcup_{\mathbf{u}(\cdot)\in\mathbb{U}(\cdot)}
	\bm{\Phi}^{\mathbf{u}(\cdot)}_{t,t_k}
	([\mathbf{x}_{t}])
	\subseteq
	[\bm{\Phi}]^{[\mathbf{u}](\cdot)}_{t,t_k}
	([\mathbf{x}_{t}]).
\end{equation}
For triangular systems this enclosure follows from the same relations in reverse time; otherwise a validated ODE solver is applied backward.

We now introduce the contractor representation used to encode and combine the feasible-state constraints in the filter.

\section{Catalog of contractors}
\label{sec:contractors}

Contractors are standard objects of interval constraint programming \cite{Jaulin2001AppliedIntervalAnalysis,ChabertJaulin09} and provide the computational layer used in the next section to translate the feasible-set recursions into operator compositions. We first recall their box-contraction semantics and consistency with the represented set, then introduce the contractor algebra and inverse-contractor constructions used for measurement updates, prediction, ambiguity, and outlier handling.

\subsection{Definition}

\begin{dfn}
	A contractor associated with a set $\mathbb X\subseteq\mathbb R^n$ is a
	box-to-box operator $\mathcal C:\mathbb{IR}^n\to\mathbb{IR}^n$ such that, for
	every box $[\mathbf x]\in\mathbb{IR}^n$,
	\begin{enumerate}
		\item[(i)] contraction:\quad
		$\mathcal C([\mathbf x])\subseteq[\mathbf x]$;
		\item[(ii)] consistency:\quad
		$\mathbb X\cap[\mathbf x]\subseteq\mathcal C([\mathbf x])$.
	\end{enumerate}
	We then say that $\mathcal C$ is a sound contractor representation of the set $\mathbb X$.
\end{dfn}

We use the notation
\begin{equation}
	\label{eq:set:represented:by:ctc}
	\mathbb X\trianglelefteq\mathcal C
	\quad\Longleftrightarrow\quad
	\forall[\mathbf x]\in\mathbb{IR}^n,\;
	\mathbb X\cap[\mathbf x]\subseteq\mathcal C([\mathbf x]).
\end{equation}
This relation expresses reliability, not contraction strength: different
contractors may be consistent with the same set while producing different
contracted boxes.

\begin{example}
Consider
\[
\mathbb X = \left\{ \mathbf x\in\mathbb R^2 \mid x_1-\exp(x_2)=0 \right\}.
\]
A contractor associated with $\mathbb X$ is
\begin{equation}
	\mathcal C([\mathbf x])
	=
	\left(
	[x_1]\cap\exp([x_2]),
	\;
	[x_2]\cap\ln([x_1])
	\right).
\end{equation}
\end{example}

Property~(i) ensures that contraction can only discard parts of the input box.
Property~(ii) is the guarantee property: every feasible vector contained in the
input box remains in the contracted box. A contractor may therefore be weak (it
may fail to remove some infeasible vectors) without losing reliability. More
precisely, the minimal contractor associated with $\mathbb X$ maps a query box
$[\mathbf x]$ to $\square(\mathbb X\cap[\mathbf x])$, the smallest box
enclosing all feasible points of the query. A non-minimal contractor may retain
additional infeasible regions: this affects contraction strength, but not
soundness. This distinction will be important for the TBN experiments of
Section~\ref{sec:applications:tbn}.

Contractors can be embedded in branch-and-contract algorithms, commonly called
pavers. Starting from an initial box $[\mathbf{x}_{0}]$, the paver contracts
candidate boxes and bisects those that cannot yet be classified at the requested
resolution. If $\mathbb{X}\trianglelefteq\mathcal{C}$, every region removed by
contraction is certified not to contain a point of $\mathbb{X}$. Consequently,
the union $\mathbb{S}_{\mathcal{P}}$ of the terminal boxes of a sound
branch-and-contract procedure satisfies
\begin{equation}
	\mathbb{X}\cap[\mathbf{x}_{0}]
	\subseteq
	\mathbb{S}_{\mathcal{P}}.
\end{equation}
If the process stops below a prescribed width $\varepsilon$, this yields a
finite-resolution outer approximation of the represented set within the search
domain.

\begin{example}
	\label{ex:ctc}
	Figure~\ref{fig:example:ctc} shows static range-only localization from two bounded distances $[y_1]$ and $[y_2]$, i.e., the set inversion $\mathbf g(\mathbf x)\in[\mathbf y]$. A reliable outer approximation of the solution set $\mathbb X$ is obtained by paving the search domain with a sound contractor $\mathcal C_{\mathbf g}$.
\end{example}

\begin{figure}[h]
	\centering
	\includegraphics[width=\linewidth]{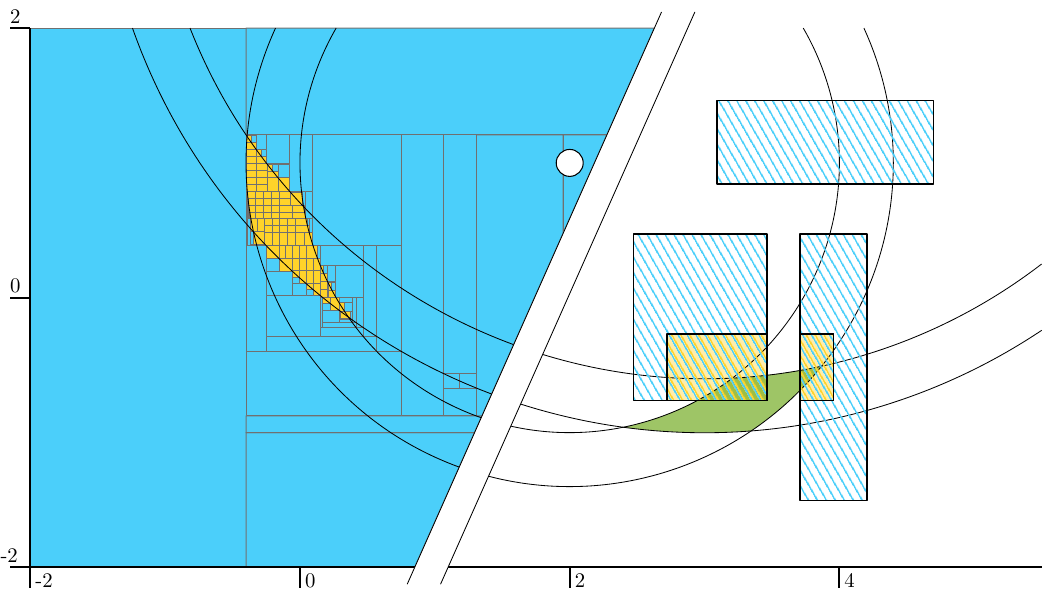}
	\caption{
		Two-distance localization (Example~\ref{ex:ctc}). Right: action of the
		contractor $\mathcal{C}_{\mathbf{g}}$ on single boxes. Left: outer sub-paving approximating
		$\operatorname{set}(\mathcal C_{\mathbf g})$, obtained by repeated contractions and subdivisions.
		Yellow boxes may contain feasible positions, whereas blue regions are proven
		infeasible.
	}
	\label{fig:example:ctc}
\end{figure}

\subsection{Algebra of contractors}

A contractor also induces an implicit set through the points that it cannot reject,
\begin{equation}
	\label{eq:set:of:ctc}
	\operatorname{set}(\mathcal C)
	=
	\left\{
	\mathbf x\in\mathbb R^n
	\mid
	\mathcal C(\{\mathbf x\})=\{\mathbf x\}
	\right\}.
\end{equation}
Here a point $\mathbf{x}$ is identified with its degenerate box $\{\mathbf x\}$. If
$\mathbb X\trianglelefteq\mathcal C$, then
$\mathbb X\subseteq\operatorname{set}(\mathcal C)$. Thus a contractor has a
dual interpretation: operationally, it is a box-to-box contraction operator;
semantically, it represents the implicit set $\operatorname{set}(\mathcal C)$. This dual view is central to Contractor Programming~\cite{ChabertJaulin09}.
It provides an abstraction layer in which sets can be manipulated through
a common contractor interface, while the numerical method used to perform each
contraction remains encapsulated.
Figure~\ref{fig:example:ctc} illustrates both interpretations: the right panel
shows $\mathcal C_{\mathbf g}$ acting as a box-to-box operator, while the left panel visualizes an outer approximation of
$\operatorname{set}(\mathcal C_{\mathbf g})$ obtained by paving.

Accordingly, set operations can be lifted directly to contractors while retaining a box-to-box implementation.
Denoting by $\mathfrak{C}_n$ the class of contractors on $\mathbb{IR}^n$, an $m$-ary contractor operation maps $m$ contractors to another contractor: $\mathfrak{O}: \mathfrak{C}_n\times\cdots\times\mathfrak{C}_n \rightarrow \mathfrak{C}_n$. Once its arguments are fixed, $\mathfrak{O}(\mathcal C_1,\ldots,\mathcal C_m)$ is an ordinary box-to-box contractor. For instance, at the box-operator level, the intersection and union of two contractors are defined by
\begin{align}
	(\mathcal C_1\cap\mathcal C_2)([\mathbf x])
	&=\mathcal C_1([\mathbf x])\cap\mathcal C_2([\mathbf x]),
	\label{eq:ctc:inter}\\
	(\mathcal C_1\cup\mathcal C_2)([\mathbf x])
	&=\square\!\left(
	\mathcal C_1([\mathbf x])\cup
	\mathcal C_2([\mathbf x])
	\right),
	\label{eq:ctc:union}
\end{align}
where $\square(\cdot)$ denotes the interval hull. The hull is required because the union of two boxes is not generally a box, whereas a contractor must return an element of $\mathbb{IR}^n$.

At point resolution, these two contractor operations exactly reproduce the corresponding set operations:
\begin{align}
	\operatorname{set}(\mathcal C_1\cap\mathcal C_2)
	&=\operatorname{set}(\mathcal C_1)\cap\operatorname{set}(\mathcal C_2),\\
	\operatorname{set}(\mathcal C_1\cup\mathcal C_2)
	&=\operatorname{set}(\mathcal C_1)\cup\operatorname{set}(\mathcal C_2).
\end{align}
This is the set-theoretic semantics behind the contractor algebra. The resulting objects remain ordinary box-to-box operators.

Therefore, if $\mathbb A\trianglelefteq\mathcal C_1$ and $\mathbb B\trianglelefteq\mathcal C_2$, then
\begin{align}
	\mathbb A\cap\mathbb B
	&\trianglelefteq\mathcal C_1\cap\mathcal C_2,
	\label{eq:ctc:set:inter:sound}\\
	\mathbb A\cup\mathbb B
	&\trianglelefteq\mathcal C_1\cup\mathcal C_2.
	\label{eq:ctc:set:union:sound}
\end{align}
The interval hull of~\eqref{eq:ctc:union} may lose disconnected geometric information within a single
box evaluation, but this does not affect soundness. When the contractor is
embedded in a paver, subdivision can recover this disconnected structure at the paving resolution
through the resulting union of boxes. The box-valued formulation remains
computationally convenient: interval boxes provide a simple lattice structure
for intersection and hull, together with efficient inclusion tests and
bisection.

This viewpoint also makes constraint composition straightforward.
The problem in Example~\ref{ex:ctc} amounts to a conjunction of two scalar
distance constraints, so almost the same contraction is obtained by intersecting
the two corresponding distance contractors.

\subsection{Inverse contractors}
\label{sec:contractors:inverse}

The range-only problem of Example~\ref{ex:ctc} is an inverse problem: the feasible positions are the preimage of the bounded measurement set, $\mathbb X=\mathbf g^{-1}([\mathbf y])$. Equivalently, when the scalar distance constraints are treated separately, each distance contractor contracts a source box with respect to the inverse image of one bounded range interval. The inverse-contractor construction generalizes this principle to arbitrary mappings and to target sets themselves represented by contractors.

Consider a mapping $\mathbf h:\mathbb R^n\rightarrow\mathbb R^p$ and an inclusion function $[\mathbf h]:\mathbb{IR}^n\rightarrow\mathbb{IR}^p$ satisfying $\forall [\mathbf x],~\mathbf{h}([\mathbf{x}])\subseteq[\mathbf{h}]([\mathbf{x}])$. Given a target contractor $\mathcal C_{\mathbb Y}\in\mathfrak C_p$ representing $\mathbb Y\subseteq\mathbb R^p$, we define the inverse-contractor operation
\begin{equation}
	\label{eq:cf:inverse:operation}
	\operatorname{Inv}_{[\mathbf h]}:
	\mathfrak C_p\rightarrow\mathfrak C_n,
\end{equation}
such that
\begin{equation}
	\label{eq:cf:inverse:soundness}
	\mathbb Y\trianglelefteq\mathcal C_{\mathbb Y}
	\quad\Rightarrow\quad
	\mathbf h^{-1}(\mathbb Y)
	\trianglelefteq
	\operatorname{Inv}_{[\mathbf h]}
	(\mathcal C_{\mathbb Y}).
\end{equation}
Once the target contractor is fixed, $\operatorname{Inv}_{[\mathbf h]}(\mathcal C_{\mathbb Y})$ is an ordinary box-to-box contractor on the input space.

An HC4Revise-style forward-backward implementation \cite{BenhamouICLP99} first evaluates $[\mathbf y]=[\mathbf h]([\mathbf x])$, then contracts the root domain according to $[\mathbf y]\leftarrow\mathcal C_{\mathbb Y}([\mathbf y])$, and propagates this restriction backward through the computational graph to contract the input domains $[x_i]$.

In the contractor framework, the important point is that the target $\mathbb Y$ is itself represented by a contractor rather than being restricted to an explicit box or other prescribed geometric representation. The same inversion primitive can therefore be applied to a bounded box, a non-convex set represented by a composition of contractors, or another contractor-valued state estimate. Such representations can therefore be combined and inverted without first materializing intermediate feasible sets as boxes or other prescribed geometric enclosures. This avoids an additional
source of over-approximation while preserving the soundness property
in~\eqref{eq:cf:inverse:soundness}.

\subsection{Contractor for relaxed intersection}

The robust recursion in~\eqref{th:exact:sequence:outliers:eq} additionally requires a contractor counterpart of the $q$-relaxed intersection defined in~\eqref{eq:qintersection:def}. Here, $q$ denotes the maximum number of violated constraints among $\ell$ constraints, so a feasible point must belong to at least $\ell-q$ of the corresponding sets.

For boxes $[\mathbf b_1],\ldots,[\mathbf b_\ell]$, define
\begin{equation}
	\label{eq:qinter:boxes}
	\mathcal Q_q([\mathbf b_1],\ldots,[\mathbf b_\ell])
	=
	\square\!\left\{
	\mathbf x\mid
	\#\{i:\mathbf x\in[\mathbf b_i]\}\geqslant \ell-q
	\right\}.
\end{equation}
Thus, $\mathcal Q_q$ is the smallest box containing all points that belong to at least $\ell-q$ input boxes.

Let $\mathbb X_i\trianglelefteq\mathcal C_i$ for $i=1,\ldots,\ell$. The operation
\begin{equation}
	\label{eq:ctc:qinter:operation}
	\operatorname{QInter}_q:
	\mathfrak C_n\times\cdots\times\mathfrak C_n
	\rightarrow
	\mathfrak C_n
\end{equation}
maps $\ell$ contractors to a new contractor. Its action on a query box is
\begin{equation}
	\label{eq:ctc:qinter}
	\operatorname{QInter}_q
	(\mathcal C_1,\ldots,\mathcal C_\ell)([\mathbf x])
	=
	\mathcal Q_q\!\left(
	\mathcal C_1([\mathbf x]),\ldots,\mathcal C_\ell([\mathbf x])
	\right).
\end{equation}

It is contracting and satisfies
\begin{equation}
	\label{eq:ctc:qinter:set}
	\mathop{\bigcap\nolimits^{\{q\}}}_{i=1}^{\ell}\mathbb X_i
	\trianglelefteq
	\operatorname{QInter}_q(\mathcal C_1,\ldots,\mathcal C_\ell).
\end{equation}
For $q=0$, this is the ordinary contractor intersection.

\begin{example}
	\label{ex:qinter}
	The range-only localization problem of Example~\ref{ex:ctc} is extended to three beacons. Suppose that one of the three bounded range measurements may be an outlier, without knowing which one. With $\ell=3$ and $q=1$, $\operatorname{QInter}_1$ combines the three distance contractors and retains all states consistent with at least two of the three measurements. Figure~\ref{fig:q:inter} illustrates the resulting relaxed contraction.
\end{example}

\begin{figure}
	\centering
	\includegraphics[width=\linewidth]{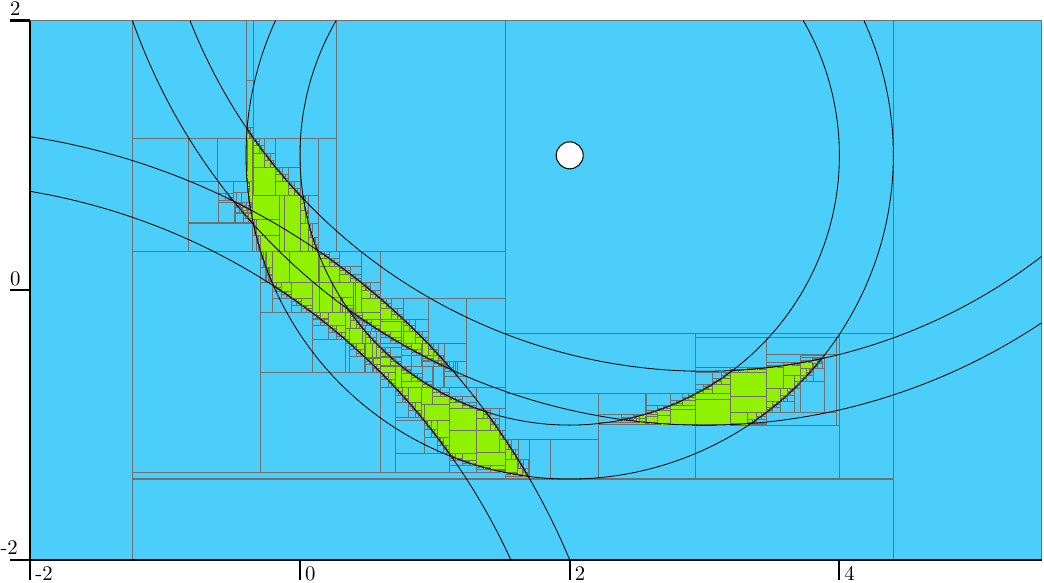}
	\caption{Example~\ref{ex:ctc} with a third beacon and one admissible outlier.
		With $\ell=3$ and $q=1$, $\operatorname{QInter}_1$ retains states consistent
		with at least two range constraints. Inner information is shown only for
		illustration and is not used by the CF.}
	\label{fig:q:inter}
\end{figure}

Computing the box operator~\eqref{eq:qinter:boxes} is a combinatorial problem. Exact and heuristic algorithms are given in~\cite{Carbonnel2014}.

\medskip

These operations provide the algebraic counterparts of the set operations in Section~\ref{sec:recursive:formulation}. In the next section, the feasible-state information is represented by a contractor $\mathcal{C}_{\mathbb{X}(t)}$ satisfying $\mathbb{X}(t)\trianglelefteq\mathcal{C}_{\mathbb{X}(t)}$. This contractor itself becomes the recursive filter state, rather than materializing a new explicit set after every update.


\section{Contractor Filters for Nonlinear Continuous-Time Systems}
\label{sec:filter}

We now translate the feasible-set recursions into a causal contractor-valued estimator.
The contractor abstraction allows this translation to closely follow the
set-theoretic equations, without committing the filter recursion to a particular
geometric state representation or numerical inversion method.
The inverse-contractor operation introduced in Section~\ref{sec:contractors:inverse} provides a common primitive for both measurement update and prediction.

\subsection{Contractor-valued recursion}

\paragraph{Observation inversion.} Let $\mathcal C_{\mathbb Y_k}\in\mathfrak C_p$ represent the measurement set $\mathbb Y_k$. The observation contractor is
\begin{equation}
	\mathcal C_k^{\mathrm{obs}}
	\triangleq
	\operatorname{Inv}_{[\mathbf g_k]}
	(\mathcal C_{\mathbb Y_k}).
	\label{eq:ctc:obs}
\end{equation}
In the common case where the measurement set is a box $\mathbb Y_k=[\mathbf y_k]$, its natural contractor is simply
\[
\mathcal C_{\mathbb Y_k}([\mathbf y]) = [\mathbf y]\cap[\mathbf y_k].
\]
The observation contractor then directly contracts the state with respect to the bounded measurement constraint $\mathbf g_k(\mathbf x)\in[\mathbf y_k]$.

\paragraph{Evolution as set inversion.} Prediction uses the same construction. For each admissible input trajectory, the ODE flow is the mapping $\bm{\Phi}^{\mathbf{u}(\cdot)}_{t_k,t}: \mathbb{R}^{n}\rightarrow\mathbb{R}^{n}$. Under existence and uniqueness, its inverse is the backward flow, hence
\begin{equation}
	\label{eq:cf:flow:image:as:inverse}
	\bm{\Phi}^{\mathbb{U}(\cdot)}_{t_k,t}(\mathbb{A})
	=
	\bigcup_{\mathbf{u}(\cdot)\in\mathbb{U}(\cdot)}
	\left(
	\bm{\Phi}^{\mathbf{u}(\cdot)}_{t,t_k}
	\right)^{-1}(\mathbb{A}).
\end{equation}
Prediction is therefore set inversion of the backward flow. On each measurement interval, let $[\bm{\Phi}]^{[\mathbf{u}](\cdot)}_{t,t_k}$ be the validated backward-flow inclusion of~\eqref{eq:backward:flow:inclusion}; local inclusions are composed across several intervals. We require
\begin{equation}
	\label{eq:cf:evolution:soundness}
	\mathbb{A}\trianglelefteq\mathcal{C}_{\mathbb{A}}
	\quad\Rightarrow\quad
	\bm{\Phi}^{\mathbb{U}(\cdot)}_{t_k,t}(\mathbb{A})
	\trianglelefteq
	\operatorname{Inv}_{
		[\bm{\Phi}]^{[\mathbf{u}](\cdot)}_{t,t_k}}
	\left(
	\mathcal{C}_{\mathbb{A}}
	\right).
\end{equation}
Thus, in the Contractor Programming framework, observation update and prediction are two instances of the same inverse-contractor algorithm: only the inclusion function and the target contractor change, while the contractor-valued representation is preserved throughout the composition.

\begin{dfn}[Contractor Filter]
\label{def:contractor:filter}
A Contractor Filter (CF) for system~\eqref{eq:system} is a causal family $\{\mathcal{C}_{\mathbb{X}(t)}\}_{t\geqslant t_0}$ satisfying
\begin{equation}
		\label{eq:cf:guarantee:def}
		\mathbb{X}(t)
		\trianglelefteq
		\mathcal{C}_{\mathbb{X}(t)}
		\qquad\forall t\geqslant t_0.
\end{equation}
\end{dfn}

We construct such a filter recursively using inverse contractors for observation and evolution together with the contractor algebra of Section~\ref{sec:contractors}.

\paragraph{Remark (A tube of contractors).}
A Contractor Filter may be viewed as a contractor-valued generalization of a
set-valued state tube,
\[
t\mapsto\mathcal C_{\mathbb X(t)}.
\]
Since each contractor $\mathcal C_{\mathbb X(t)}$ induces the set
$\operatorname{set}(\mathcal C_{\mathbb X(t)})\subseteq\mathbb R^n$, the
CF also induces the set-valued tube
\[
t\mapsto\operatorname{set}(\mathcal C_{\mathbb X(t)}),
\]
which encloses the feasible state set $\mathbb X(t)$ at each time. In particular, a CF is not merely a sequence of contractors defined at measurement times: between two successive updates, the validated flow construction defines $\mathcal C_{\mathbb X(t)}$ for any queried time $t\in[t_k,t_{k+1}]$. We emphasize that this contractor-valued tube should not be confused with a contractor acting on tube-valued variables~\cite{RohouTubint}.

Let $\mathcal C^-_{t_k}$ and $\mathcal C^+_{t_k}$ represent the prior and corrected sets. The update is
\begin{equation}
	\label{eq:cf:updated:prior}
	\mathcal{C}^{+}_{t_k}
	=
	\mathcal{C}^{-}_{t_k}
	\cap
	\mathcal{C}^{\mathrm{obs}}_{k}.
\end{equation}
With no initialization information, $\mathbb X^-_{t_0}=\mathbb R^n$ is represented by the identity contractor $\mathcal I$ (i.e., $\mathcal I([\mathbf x])=[\mathbf x]$); no artificial finite prior is required. For $t\in[t_k,t_{k+1})$,
\begin{equation}
	\label{eq:ctcfilter}
	\mathcal{C}_{\mathbb{X}(t)}
	=
	\operatorname{Inv}_{
		[\bm{\Phi}]^{[\mathbf{u}](\cdot)}_{t,t_k}}
	\left(
	\mathcal{C}^{+}_{t_k}
	\right).
\end{equation}
Algorithm~\ref{alg:cf:nominal} summarizes the nominal recursion and optional compression.

\begin{proposition}[Nominal CF soundness]
	\label{prop:cf:soundness}
	
	Assume a sound prior
	$\mathbb{X}^{-}_{t_k}\trianglelefteq\mathcal{C}^{-}_{t_k}$,
	and a sound measurement representation
	$\mathbb{Y}_{k}\trianglelefteq\mathcal{C}_{\mathbb{Y}_{k}}$.
	Assume also that the observation inverse contractor satisfies the soundness
	property~\eqref{eq:cf:inverse:soundness} and that the validated backward-flow
	inversion satisfies~\eqref{eq:cf:evolution:soundness}. Then, for every
	$t\in[t_k,t_{k+1})$,
	\begin{equation}
	\mathbb X(t)
	\trianglelefteq
	\operatorname{Inv}_{
		[\bm{\Phi}]^{[\mathbf u](\cdot)}_{t,t_k}}
	\left(
	\mathcal C^-_{t_k}
	\cap
	\operatorname{Inv}_{[\mathbf g_k]}
	(\mathcal C_{\mathbb Y_k})
	\right)
	=
	\mathcal C_{\mathbb X(t)}.
	\end{equation}
	Hence the recursion defines a Contractor Filter in the sense of
	Definition~\ref{def:contractor:filter}.
\end{proposition}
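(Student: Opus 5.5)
The argument is a chain of soundness transfers. It follows the structure of Proposition~\ref{th:exact:sequence:nooutliers}, with each set operation replaced by its contractor counterpart.

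\textbf{Step 1: observation contractor.} From the hypothesis $\mathbb{Y}_k\trianglelefteq\mathcal C_{\mathbb Y_k}$ and the inverse-contractor property~\eqref{eq:cf:inverse:soundness} applied with $\mathbf h=\mathbf g_k$, we get
\[
\mathbf g_k^{-1}(\mathbb Y_k)\trianglelefteq\operatorname{Inv}_{[\mathbf g_k]}(\mathcal C_{\mathbb Y_k})=\mathcal C_k^{\mathrm{obs}}.
\]

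\textbf{Step 2: corrected set.} Combining Step~1 with the sound prior $\mathbb X^-_{t_k}\trianglelefteq\mathcal C^-_{t_k}$, the intersection rule~\eqref{eq:ctc:set:inter:sound} gives
\[
\mathbb X^+_{t_k}=\mathbb X^-_{t_k}\cap\mathbf g_k^{-1}(\mathbb Y_k)\trianglelefteq\mathcal C^-_{t_k}\cap\mathcal C_k^{\mathrm{obs}}=\mathcal C^+_{t_k},
\]
where the set equality is~\eqref{eq:nominal:posterior:set}. If one wants this self-contained, \eqref{eq:ctc:set:inter:sound} can be checked directly from~\eqref{eq:ctc:inter}: for any box $[\mathbf x]$, $(\mathbb A\cap\mathbb B)\cap[\mathbf x]$ lies in both $\mathcal C_1([\mathbf x])$ and $\mathcal C_2([\mathbf x])$.

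\textbf{Step 3: prediction.} Apply the flow-inversion hypothesis~\eqref{eq:cf:evolution:soundness} with $\mathbb A=\mathbb X^+_{t_k}$ and $\mathcal C_{\mathbb A}=\mathcal C^+_{t_k}$. This yields
\[
\bm\Phi^{\mathbb U(\cdot)}_{t_k,t}(\mathbb X^+_{t_k})\trianglelefteq\operatorname{Inv}_{[\bm\Phi]^{[\mathbf u](\cdot)}_{t,t_k}}(\mathcal C^+_{t_k}).
\]
By the exact recursion~\eqref{th:exact:sequence:nooutliers:eq} of Proposition~\ref{th:exact:sequence:nooutliers}, the left-hand side equals $\mathbb X(t)$ for $t\in[t_k,t_{k+1})$, since $t_k$ is then the last measurement time not exceeding $t$. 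Unfolding the definitions~\eqref{eq:cf:updated:prior}, \eqref{eq:ctc:obs} and~\eqref{eq:ctcfilter} identifies the right-hand side with $\mathcal C_{\mathbb X(t)}$, which is the claimed relation.

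\textbf{Step 4: Contractor Filter property.} To conclude that the recursion is a Contractor Filter, induct over $k$. The base case is $\mathbb X^-_{t_0}=\mathbb R^n\trianglelefteq\mathcal I$, which is trivial. For the inductive step, the prior at $t_{k+1}$ is obtained as the limit of the prediction at $t\to t_{k+1}$, i.e.\ $\mathcal C^-_{t_{k+1}}$ is defined by~\eqref{eq:ctcfilter} evaluated at $t=t_{k+1}$. Its soundness follows from the same argument as Step~3, since the flow inclusion is defined on the closed interval $T_k$.

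\textbf{Main obstacle.} The delicate point is Step~3, specifically matching the exact feasible set with the reachable set of the \emph{true} tube $\mathbb U(\cdot)$ when the implementation only uses the box tube $[\mathbf u](\cdot)\supseteq\mathbb U(\cdot)$. I would handle this by noting that~\eqref{eq:cf:evolution:soundness} is assumed exactly in terms of $\bm\Phi^{\mathbb U(\cdot)}$. That hypothesis itself is justified via~\eqref{eq:cf:flow:image:as:inverse} and the backward-flow inclusion~\eqref{eq:backward:flow:inclusion}. In outline: a point of $\bm\Phi^{\mathbb U(\cdot)}_{t_k,t}(\mathbb A)\cap[\mathbf x]$ is mapped by some backward flow into $\mathbb A\cap[\bm\Phi]([\mathbf x])$, and therefore survives contraction of the target by $\mathcal C_{\mathbb A}$ and the backward propagation. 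Since this is a stated hypothesis of the proposition, the proof only needs to invoke it, and the remaining work is bookkeeping of the compositions.
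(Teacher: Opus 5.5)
Your proof is correct and matches the paper's argument step for step. Soundness of $\mathcal C^{\mathrm{obs}}_k$ comes from~\eqref{eq:cf:inverse:soundness}, soundness of $\mathcal C^+_{t_k}$ from contractor intersection, and the conclusion from~\eqref{eq:cf:evolution:soundness} combined with Proposition~\ref{th:exact:sequence:nooutliers}; your explicit induction over $k$, taking $\mathcal C^-_{t_{k+1}}$ as the prediction evaluated at $t_{k+1}$ as in Algorithm~\ref{alg:cf:nominal}, only spells out what the paper leaves in its final ``Hence''.
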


\begin{proof}
From $\mathbb{Y}_{k}\trianglelefteq\mathcal{C}_{\mathbb{Y}_{k}}$ and \eqref{eq:ctc:obs},
$\mathbf{g}_{k}^{-1}(\mathbb{Y}_{k}) \trianglelefteq \mathcal{C}^{\mathrm{obs}}_{k}$.
Together with $\mathbb{X}^{-}_{t_k}\trianglelefteq\mathcal{C}^{-}_{t_k}$, the soundness of contractor intersection gives
\[
\mathbb{X}^{+}_{t_k} = \mathbb{X}^{-}_{t_k} \cap \mathbf{g}_{k}^{-1}(\mathbb{Y}_{k}) \trianglelefteq \mathcal{C}^{-}_{t_k} \cap \mathcal{C}^{\mathrm{obs}}_{k} = \mathcal{C}^{+}_{t_k}.
\]
Applying~\eqref{eq:cf:evolution:soundness} and the exact bounded-input recursion of Proposition~\ref{th:exact:sequence:nooutliers} yields
\[
\mathbb{X}(t) \trianglelefteq \operatorname{Inv}_{ [\bm{\Phi}]^{[\mathbf{u}](\cdot)}_{t,t_k}} \left(\mathcal{C}^{+}_{t_k}\right) = \mathcal{C}_{\mathbb{X}(t)}.
\]
Hence the recursion satisfies~\eqref{eq:cf:guarantee:def}.
\end{proof}

\paragraph{Operational interpretation.} At any time $t$, $\mathcal C_{\mathbb X(t)}$ is the top-level operator of a contractor-composition graph encoding the filtering recursion up to that time. Querying it with a box recursively activates the underlying contractors; the box may be rejected directly, or further refined by a paver through subdivision and repeated contractions. Hence the CF preserves the symbolic constraint structure instead of materializing a geometric enclosure after each update. With sufficiently tight contractors, paving can therefore approximate
the exact nominal feasible set in~\eqref{th:exact:sequence:nooutliers:eq}
up to the paving resolution and the over-approximation introduced by the
contractors and the validated flow inclusion. This avoids an additional
geometric materialization step after every filtering update.

\subsection{{Recursive implementation with sub-paving compression}}
\label{sec:recursive:implementation}
\label{sec:subpaving:compression}

The symbolic contractor graph grows with each update, so that retaining the complete history increases the number of operations traversed by each query to the current top-level contractor. We therefore optionally compress this history by replacing the subgraph encoding earlier filtering steps with a contractor backed by a finite outer sub-paving. Subsequent queries use this paving instead of traversing the discarded contractor composition. This is primarily a change of representation, trading graph traversal for paving storage and querying, and does not necessarily reduce memory usage.

Let $\mathbb{D}_k$ be a bounded working domain satisfying $\mathbb{X}^{+}_{t_k}\subseteq\mathbb{D}_k$. For a sound state contractor $\mathbb{X}^{+}_{t_k}\trianglelefteq\mathcal{C}^{+}_{t_k}$, we denote by
\begin{equation}
	\label{eq:compression:paving}
	\mathcal{P}_k
	=
	\operatorname{Pave}
	\left(
	\mathbb{D}_k,
	\mathcal{C}^{+}_{t_k},
	\varepsilon_k
	\right)
	=
	\{[\mathbf{b}^{(j)}_k]\}_{j=1}^{N_k}
\end{equation}
a finite outer sub-paving produced by a sound branch-and-contract procedure at resolution $\varepsilon_k$, such that
\begin{equation}
	\label{eq:compression:outer}
	\mathbb{X}^{+}_{t_k}\cap\mathbb{D}_k
	\subseteq
	\mathbb{S}_{\mathcal{P}_k},
	\qquad
	\mathbb{S}_{\mathcal{P}_k}
	\triangleq
	\bigcup_{j=1}^{N_k}[\mathbf{b}^{(j)}_k].
\end{equation}
Since $\mathbb{X}^{+}_{t_k}\subseteq\mathbb{D}_k$, this reduces to $\mathbb{X}^{+}_{t_k}\subseteq\mathbb{S}_{\mathcal{P}_k}$.

The corresponding sub-paving contractor is
\begin{equation}
    \label{eq:paving:contractor}
    \mathcal{C}_{\mathcal{P}_k}([\mathbf{x}])
    =
    \square\!\left(
        \mathbb{S}_{\mathcal{P}_k}\cap[\mathbf{x}]
    \right)
    =
    \square\!\left(
        \bigcup_{j=1}^{N_k}
        \big([\mathbf{x}]\cap[\mathbf{b}^{(j)}_k]\big)
    \right),
\end{equation}
so
\begin{equation}
    \label{eq:paving:contractor:sound}
    \mathbb{S}_{\mathcal{P}_k}
    \trianglelefteq
    \mathcal{C}_{\mathcal{P}_k}.
\end{equation}
The stored representation remains the finite sub-paving $\mathcal P_k$. The interval hull is computed only when evaluating a query box. The paving boxes may be indexed by an appropriate spatial data structure for efficient queries.

Although~\eqref{eq:compression:paving} is written with a target resolution $\varepsilon_k$, the same soundness argument applies to a time-budgeted paver provided that unresolved boxes are retained in the returned outer paving. This conservative time-budgeted variant is used in Section~\ref{sec:applications}.

\begin{proposition}[Compression soundness]
\label{prop:cf:compression:soundness}
Assume $\mathbb{X}^{+}_{t_k}\trianglelefteq\mathcal{C}^{+}_{t_k}$ and $\mathbb{X}^{+}_{t_k}\subseteq\mathbb{D}_k$, and let $\mathcal{P}_k$ be returned by the sound paving operation in~\eqref{eq:compression:paving}. Then
\begin{equation}
        \label{eq:compression:soundness}
        \mathbb{X}^{+}_{t_k}
        \trianglelefteq
        \mathcal{C}_{\mathcal{P}_k}.
\end{equation}
Consequently, replacing $\mathcal{C}^{+}_{t_k}$ by $\mathcal{C}_{\mathcal{P}_k}$ preserves the CF enclosure guarantee.
\end{proposition}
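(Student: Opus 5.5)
The plan is to chain the outer-approximation property of the paving with the soundness of the sub-paving contractor, using the relation $\trianglelefteq$ of~\eqref{eq:set:represented:by:ctc} directly. The first step is to observe that $\trianglelefteq$ is monotone in its set argument. If $\mathbb{A}\subseteq\mathbb{B}$ and $\mathbb{B}\trianglelefteq\mathcal{C}$, then for every box $[\mathbf{x}]$ we have $\mathbb{A}\cap[\mathbf{x}]\subseteq\mathbb{B}\cap[\mathbf{x}]\subseteq\mathcal{C}([\mathbf{x}])$, hence $\mathbb{A}\trianglelefteq\mathcal{C}$.

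The second step establishes $\mathbb{X}^{+}_{t_k}\subseteq\mathbb{S}_{\mathcal{P}_k}$. Because $\mathcal{P}_k$ is produced by a sound branch-and-contract procedure driven by $\mathcal{C}^{+}_{t_k}$, with $\mathbb{X}^{+}_{t_k}\trianglelefteq\mathcal{C}^{+}_{t_k}$, every region it discards is certified free of points of $\mathbb{X}^{+}_{t_k}$. This gives~\eqref{eq:compression:outer}, namely $\mathbb{X}^{+}_{t_k}\cap\mathbb{D}_k\subseteq\mathbb{S}_{\mathcal{P}_k}$. The hypothesis $\mathbb{X}^{+}_{t_k}\subseteq\mathbb{D}_k$ then removes the intersection with $\mathbb{D}_k$. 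For the time-budgeted variant, the same inclusion holds provided unresolved boxes are kept in the output, since only contractor-certified removals occur.

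The third step verifies~\eqref{eq:paving:contractor:sound} from the definition~\eqref{eq:paving:contractor}. Contraction holds because $\square(\mathbb{S}_{\mathcal{P}_k}\cap[\mathbf{x}])\subseteq[\mathbf{x}]$, as $[\mathbf{x}]$ is a box containing that set. Consistency holds because $\mathbb{S}_{\mathcal{P}_k}\cap[\mathbf{x}]\subseteq\square(\mathbb{S}_{\mathcal{P}_k}\cap[\mathbf{x}])$. When the intersection is empty, the hull is the empty box, and both properties hold trivially. Combining this with the first two steps yields~\eqref{eq:compression:soundness}.

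For the consequence, Proposition~\ref{prop:cf:soundness} and the subsequent recursion use $\mathcal{C}^{+}_{t_k}$ only through the hypothesis $\mathbb{X}^{+}_{t_k}\trianglelefteq\mathcal{C}^{+}_{t_k}$. Via~\eqref{eq:cf:evolution:soundness}, this gives $\mathbb{X}(t)\trianglelefteq\operatorname{Inv}_{[\bm{\Phi}]}(\mathcal{C}_{\mathcal{P}_k})$, and later priors inherit soundness by induction. The main point needing care is the second step: formally justifying that the paver's output covers $\mathbb{X}^{+}_{t_k}$. I would argue this by induction on the paver's iterations, with the invariant that $\mathbb{X}^{+}_{t_k}\cap\mathbb{D}_k$ is contained in the union of pending and terminal boxes. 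Each contraction preserves the invariant by consistency, and each bisection preserves it because the two halves cover their parent box.
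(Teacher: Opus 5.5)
Your proof is correct and follows the paper's argument. The paper likewise obtains $\mathbb{X}^{+}_{t_k}\subseteq\mathbb{S}_{\mathcal{P}_k}$ from~\eqref{eq:compression:outer} and $\mathbb{X}^{+}_{t_k}\subseteq\mathbb{D}_k$, then chains $\mathbb{X}^{+}_{t_k}\cap[\mathbf{x}]\subseteq\mathbb{S}_{\mathcal{P}_k}\cap[\mathbf{x}]\subseteq\mathcal{C}_{\mathcal{P}_k}([\mathbf{x}])$ through the interval hull. Your paver-invariant induction, the check of the contraction property, and the downstream induction are extra detail that the paper leaves implicit, since it takes~\eqref{eq:compression:outer} as the defining property of the sound paving operation.
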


\begin{proof}
By~\eqref{eq:compression:outer}, $\mathbb X^+_{t_k}\subseteq\mathbb S_{\mathcal P_k}$. Hence, for every query box $[\mathbf x]$,
\[
\mathbb X^+_{t_k}\cap[\mathbf x] \subseteq \mathbb S_{\mathcal P_k}\cap[\mathbf x] \subseteq \mathcal C_{\mathcal P_k}([\mathbf x]),
\]
by the definition of the paving contractor. Therefore, $\mathbb X^+_{t_k}\trianglelefteq\mathcal C_{\mathcal P_k}$.
\end{proof}

Algorithm~\ref{alg:cf:nominal} makes explicit the compression step, in which the accumulated symbolic contractor composition is replaced by the sub-paving contractor $\mathcal C_{\mathcal P_k}$.
Figure~\ref{fig:cf:graph:nominal} shows the nominal CF recursion and its optional compression as a contractor graph.

\begin{algorithm}[t]
\caption{Nominal Contractor Filter}
\label{alg:cf:nominal}
\begin{algorithmic}[1]
\State \textbf{Input:} $\mathcal{C}^{+}_{t_{k-1}}$, $\mathcal{C}_{\mathbb{Y}_k}$,
$[\mathbf{g}_k]$, $[\bm{\Phi}]^{[\mathbf{u}](\cdot)}_{t_k,t_{k-1}}$,
$\mathbb{D}_k$, $\varepsilon_k$
\State \textbf{Output:} $\mathcal{C}^{+}_{t_k}$
\State $\mathcal{C}^{-}_{t_k}
\gets
\operatorname{Inv}_{
[\bm{\Phi}]^{[\mathbf{u}](\cdot)}_{t_k,t_{k-1}}}
\left(
\mathcal{C}^{+}_{t_{k-1}}
\right)$
\State $\mathcal{C}^{+}_{t_k}
\gets
\mathcal{C}^{-}_{t_k}
\cap
\operatorname{Inv}_{[\mathbf{g}_k]}
\left(
\mathcal{C}_{\mathbb{Y}_k}
\right)$
\If{compression criterion is met}
    \State $\mathcal{P}_k
    \gets
    \operatorname{Pave}
    \left(
    \mathbb{D}_k,
    \mathcal{C}^{+}_{t_k},
    \varepsilon_k
    \right)$
    \State $\mathcal{C}^{+}_{t_k}
    \gets
    \mathcal{C}_{\mathcal{P}_k}$
\EndIf
\end{algorithmic}
\end{algorithm}

\begin{figure*}[t]
    \centering
    {\includegraphics[width=0.8\textwidth]{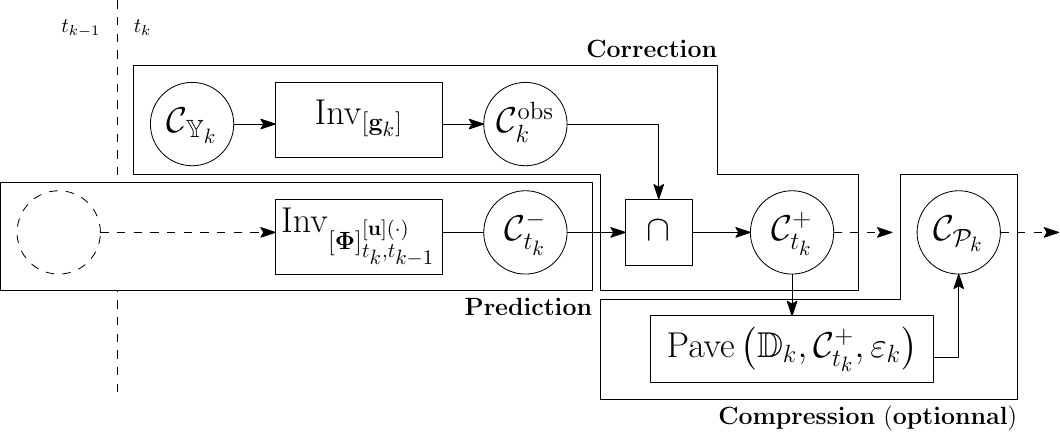}}
    \caption{
    Contractor graph of one nominal CF recursion, showing the prediction,
    correction, and optional sub-paving compression stages. The dashed
    continuation indicates the recursive link to the previous and next filtering steps.
    When compression is activated, the materialized contractor
    $\mathcal{C}_{\mathcal{P}_k}$ replaces the accumulated symbolic graph.}
    \label{fig:cf:graph:nominal}
\end{figure*}

Compression may be triggered periodically or according to graph depth, evaluation time, or memory usage. After compression, query cost depends on the paving size and resolution rather than on the depth of the discarded graph.

\subsection{Robust Contractor Filters}
\label{sec:robust:ctc:filter}

The robust CF implements the factorized enclosure \eqref{th:exact:sequence:outliers:eq}. For $k\geqslant\ell-1$, define the MNO window
\[
\mathbb W_k=\{k-\ell+1,\ldots,k\}.
\]
For each $i\in\mathbb W_k$, let $\mathcal C_{i\rightarrow k}$ denote the contractor associated with observation $i$, expressed in the state space at the current time $t_k$. For a past observation $i<k$, this contractor is obtained by transporting the observation constraint from $t_i$ to $t_k$ through the validated flow inclusion:
\begin{equation}
	\label{eq:cf:robust:propagated:observation}
	\mathcal{C}_{i\rightarrow k}
	\triangleq
	\operatorname{Inv}_{
		[\bm{\Phi}]^{[\mathbf{u}](\cdot)}_{t_k,t_i}}
	\left(
	\mathcal{C}^{\mathrm{obs}}_i
	\right),
	\qquad
	i\in\mathbb{W}_k\setminus\{k\}.
\end{equation}
For $i=k$, no transport is required, since the observation already constrains the current state directly: $\mathcal C_{k\rightarrow k}\triangleq\mathcal C_k^{\mathrm{obs}}.$

The factorized robust update is then
\begin{equation}
    \label{eq:ctcfilter:robust}
    \mathcal{C}^{+}_{t_k}
    =
    \mathcal{C}^{-}_{t_k}
    \cap
    \operatorname{QInter}_{q}
    \left(
        (\mathcal{C}_{i\rightarrow k})_{i\in\mathbb{W}_k}
    \right).
\end{equation}
Between measurements, the nominal prediction~\eqref{eq:ctcfilter} applies. Algorithm~\ref{alg:cf:robust} summarizes the robust recursion.

\begin{algorithm}[t]
\caption{MNO-Robust Contractor Filter}
\label{alg:cf:robust}
\begin{algorithmic}[1]
\State \textbf{Input:} $\mathcal{C}^{+}_{t_{k-1}}$,
$\{\mathcal{C}_{\mathbb{Y}_i},[\mathbf{g}_i]\}_{i\in\mathbb{W}_k}$,
$[\bm{\Phi}]^{[\mathbf{u}](\cdot)}$, $\ell$, $q$,
$\mathbb{D}_k$, $\varepsilon_k$
\State \textbf{Output:} $\mathcal{C}^{+}_{t_k}$
\State $\mathcal{C}^{-}_{t_k}
\gets
\operatorname{Inv}_{
[\bm{\Phi}]^{[\mathbf{u}](\cdot)}_{t_k,t_{k-1}}}
\left(
\mathcal{C}^{+}_{t_{k-1}}
\right)$
\State $\mathcal{C}_{k\rightarrow k}
\gets
\operatorname{Inv}_{[\mathbf{g}_k]}
\left(
\mathcal{C}_{\mathbb{Y}_k}
\right)$
\For{$i\in\mathbb{W}_k\setminus\{k\}$}
    \State $\mathcal{C}_{i\rightarrow k}
    \gets
    \operatorname{Inv}_{
    [\bm{\Phi}]^{[\mathbf{u}](\cdot)}_{t_k,t_i}}
    \left(
    \operatorname{Inv}_{[\mathbf{g}_i]}
    \left(
    \mathcal{C}_{\mathbb{Y}_i}
    \right)
    \right)$
\EndFor
\State $\mathcal{C}^{+}_{t_k}
\gets
\mathcal{C}^{-}_{t_k}
\cap
\operatorname{QInter}_{q}
\left(
(\mathcal{C}_{i\rightarrow k})_{i\in\mathbb{W}_k}
\right)$
\If{compression criterion is met}
    \State $\mathcal{P}_k
    \gets
    \operatorname{Pave}
    \left(
    \mathbb{D}_k,
    \mathcal{C}^{+}_{t_k},
    \varepsilon_k
    \right)$
    \State $\mathcal{C}^{+}_{t_k}
    \gets
    \mathcal{C}_{\mathcal{P}_k}$
\EndIf
\end{algorithmic}
\end{algorithm}

Figure~\ref{fig:cf:graph:robust} shows the robust update for $\ell=3$. The robust formulation introduces the two parameters $(\ell,q)$: the MNO-window length and the maximum number of violated observation constraints within that window. As in the nominal CF, the recursive state contractor may be compressed by sub-paving. This compression replaces only the recursive state graph. Recent observation contractors and local validated flow inclusions are retained as long as they are needed to construct subsequent MNO windows.

\begin{figure*}[t]
    \centering
    {\includegraphics[width=0.8\textwidth]{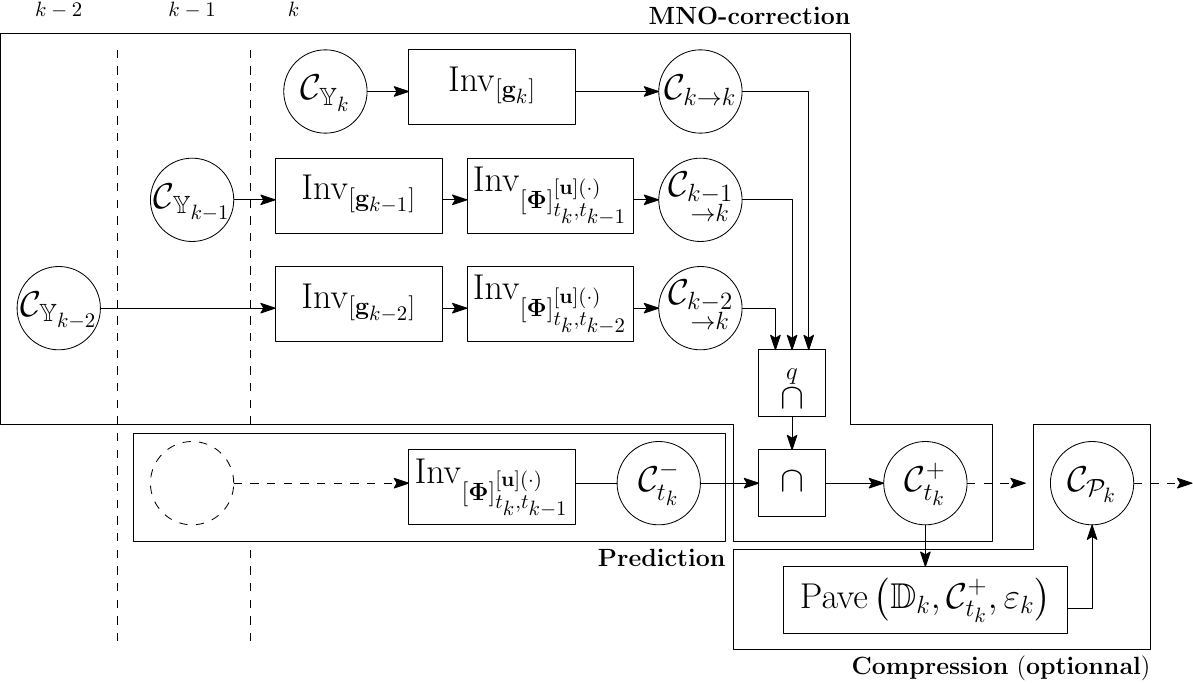}}
    \caption{
    Contractor graph of an MNO-robust CF update, illustrated for
    $\ell=3$, together with optional sub-paving compression. The propagated
    observation contractors are first combined by the $q$-relaxed
    intersection. It is then intersected with the prediction
    contractor $\mathcal{C}^{-}_{t_k}$ to form
    $\mathcal{C}^{+}_{t_k}$.}
    \label{fig:cf:graph:robust}
\end{figure*}

\begin{proposition}[Robust CF soundness]
\label{prop:cf:robust:soundness}
Assume the MNO condition and a sound prior $\mathbb{X}^{-}_{t_k}\trianglelefteq\mathcal{C}^{-}_{t_k}$. Assume also sound observation and validated flow inversions, together with the relaxed-intersection property~\eqref{eq:ctc:qinter:set}.
Then the robust update satisfies
\begin{equation}
		\label{eq:cf:robust:posterior:soundness}
		\mathbb{X}(t_k)
		\trianglelefteq
		\mathcal{C}^{+}_{t_k}.
\end{equation}
Subsequent prediction by~\eqref{eq:ctcfilter} preserves the CF guarantee for $t\in[t_k,t_{k+1})$.
\end{proposition}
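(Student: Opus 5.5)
The plan mirrors the proof of Proposition~\ref{prop:cf:soundness}: show that each building block of~\eqref{eq:ctcfilter:robust} soundly represents the matching set factor of the factorized robust enclosure~\eqref{th:exact:sequence:outliers:eq}, and then use the monotonicity of $\trianglelefteq$ with respect to set inclusion. Concretely, if $\mathbb A\subseteq\mathbb B$ and $\mathbb B\trianglelefteq\mathcal C$, then $\mathbb A\cap[\mathbf x]\subseteq\mathbb B\cap[\mathbf x]\subseteq\mathcal C([\mathbf x])$ for every box, so $\mathbb A\trianglelefteq\mathcal C$. Theorem~\ref{th:exact:sequence:outliers} is evaluated at $t=t_k$, where $\bm{\Phi}_{t_k,t_k}^{\mathbb U(\cdot)}$ is the identity. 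It then reduces the claim to showing
\[
\mathbb X^-_{t_k}\cap\mathop{\bigcap\nolimits^{\{q\}}}_{i\in\mathbb W_k}\big(\bm{\Phi}^{\mathbb U(\cdot)}_{t_i,t_k}\circ\mathbf g_i^{-1}\big)(\mathbb Y_i)\ \trianglelefteq\ \mathcal C^+_{t_k}.
\]

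The steps go in this order.

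\textbf{Step 1.} From $\mathbb Y_i\trianglelefteq\mathcal C_{\mathbb Y_i}$ and~\eqref{eq:cf:inverse:soundness}, obtain $\mathbf g_i^{-1}(\mathbb Y_i)\trianglelefteq\mathcal C_i^{\mathrm{obs}}$ for every $i\in\mathbb W_k$. This settles the case $i=k$ directly.

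\textbf{Step 2.} For $i<k$, apply the evolution soundness~\eqref{eq:cf:evolution:soundness} on $[t_i,t_k]$ with $\mathbb A=\mathbf g_i^{-1}(\mathbb Y_i)$. This gives $\bm{\Phi}^{\mathbb U(\cdot)}_{t_i,t_k}(\mathbf g_i^{-1}(\mathbb Y_i))\trianglelefteq\mathcal C_{i\rightarrow k}$, as defined in~\eqref{eq:cf:robust:propagated:observation}.

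\textbf{Step 3.} Invoke~\eqref{eq:ctc:qinter:set} to obtain soundness of $\operatorname{QInter}_q((\mathcal C_{i\rightarrow k})_{i\in\mathbb W_k})$ for the $q$-relaxed intersection of these transported sets.

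\textbf{Step 4.} Combine with the sound prior through the intersection rule~\eqref{eq:ctc:set:inter:sound}, then conclude by monotonicity.

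The final assertion about prediction for $t\in[t_k,t_{k+1})$ follows by repeating the second half of the proof of Proposition~\ref{prop:cf:soundness}. The set $\mathbb X(t)$ is the image of the corrected set $\mathbb X(t_k)$ under $\bm{\Phi}^{\mathbb U(\cdot)}_{t_k,t}$, since every trajectory feasible at $t$ passes through a state feasible at $t_k$. Applying~\eqref{eq:cf:evolution:soundness} with $\mathbb A=\mathbb X(t_k)$ and $\mathcal C_{\mathbb A}=\mathcal C^+_{t_k}$ then gives $\mathbb X(t)\trianglelefteq\mathcal C_{\mathbb X(t)}$ as in~\eqref{eq:ctcfilter}.

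I expect the main obstacle to be Step 2. The flow inclusion is stated only on one measurement interval $T_k$, but the transport spans $[t_i,t_k]$, so several intervals must be chained. I would first show that composing inverse contractors preserves soundness: if $\mathbb A\trianglelefteq\mathcal C$, then $\bm\Phi_{t_{j},t_{j+1}}^{\mathbb U}(\mathbb A)\trianglelefteq\operatorname{Inv}(\mathcal C)$. Applying this repeatedly on $[t_i,t_{i+1}],\dots,[t_{k-1},t_k]$ yields soundness for the composite. The remaining fact is that the union-over-inputs flow on the long interval is contained in the composition of the per-interval set-valued flows. This holds because a single input restricted to each subinterval is still admissible, so the one-step set-valued flows compose to an outer enclosure.
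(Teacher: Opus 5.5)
Your proposal is correct and follows the paper's proof. Sound observation inversion and validated transport give sound contractors $\mathcal C_{i\rightarrow k}$; the relaxed-intersection property~\eqref{eq:ctc:qinter:set} and contractor intersection with $\mathcal C^-_{t_k}$ then cover the factorized enclosure of Theorem~\ref{th:exact:sequence:outliers} at $t=t_k$; and~\eqref{eq:cf:evolution:soundness} handles the prediction on $[t_k,t_{k+1})$. Your explicit monotonicity lemma for $\trianglelefteq$ and your chaining of per-interval flow inclusions over $[t_i,t_k]$ make precise what the paper leaves implicit by simply assuming the composed validated transport is sound (your chained contractor is a nested composition of one-interval $\operatorname{Inv}$ operations rather than a single $\operatorname{Inv}$ of the composed inclusion, but soundness holds for either realization by the same input-restriction argument).
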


\begin{proof}
For each $i\in\mathbb W_k$, observation inversion and validated transport give a sound contractor $\mathcal C_{i\rightarrow k}$ for the corresponding observation constraint expressed at $t_k$. Hence, by~\eqref{eq:ctc:qinter:set}, their $q$-relaxed intersection soundly encloses the relaxed observation constraint of the factorized enclosure~\eqref{th:exact:sequence:outliers:eq}. Together with $\mathbb{X}^{-}_{t_k}\trianglelefteq\mathcal{C}^{-}_{t_k}$ and the soundness of contractor intersection, this yields
\[
\mathbb X(t_k) \trianglelefteq \mathcal C^-_{t_k} \cap \operatorname{QInter}_q \left( (\mathcal C_{i\rightarrow k})_{i\in\mathbb W_k} \right) = \mathcal C^+_{t_k}.
\]
Applying the nominal prediction~\eqref{eq:ctcfilter} and the evolution soundness property~\eqref{eq:cf:evolution:soundness} then gives $\mathbb X(t)\trianglelefteq\mathcal C_{\mathbb X(t)}$ for all $t\in[t_k,t_{k+1})$.
\end{proof}

Thus, the robust CF provides a contractor realization of the factorized outer
enclosure of Theorem~\ref{th:exact:sequence:outliers}. If sub-paving compression
is applied, Proposition~\ref{prop:cf:compression:soundness} preserves the same
enclosure guarantee.

\paragraph{Separator extension.} Separators may provide inner information for static inverse-image problems \cite{DesrochersEAAI2014,DesrochersPolar2016,jaulin_inner_state2016}. Extending this property to the present dynamic setting is less direct, since prediction relies on tube and validated-flow representations that provide outer enclosures but do not, in general, propagate certified inner information. A Separator Filter would therefore require a dedicated inner-reachability formalism and is left for future work. The separator information shown in Fig.~\ref{fig:q:inter} is used only for illustration on a static example, and is not part of the CF recursion.

\FloatBarrier
\section{Applications and evaluation}
\label{sec:applications}

We compare CF with a robust particle filter (PF) on controlled range-only (RO) and terrain-based (TBN) localization problems that separate outliers, systematic probabilistic model mismatch, association ambiguity, and multimodality caused by terrain ambiguity. These scenarios are representative of difficult nonlinear localization problems in which the feasible state may be large, disconnected, or weakly observable. Simulated data are deliberately used to control each source of difficulty independently and to provide an exact ground truth for assessing enclosure guarantees and statistical success rates. In all cases, the initial information is limited to the bounded search domain specified for each scenario: no localized initial estimate is provided. All quantitative campaigns include outliers, while non-outlier measurements remain consistent with the CF bounded-error model.

\subsection{Common protocol}
\label{sec:applications:protocol}

For each scalar input or measurement, let $\rho$ denote the uncertainty half-width used by the CF. Non-outlier errors are generated with a bias fraction $\alpha$ as
\begin{equation}
	e=\alpha\rho+\eta,\qquad |\eta|\leq(1-\alpha)\rho,
	\label{eq:applications:bias}
\end{equation}
where $\eta$ is a zero-mean Gaussian truncated to the remaining support. Thus,
$|e|\leq\rho$, so the biased non-outlier data remain consistent with the CF
bounded-error model. The bias fraction $\alpha$ is used only to generate the
simulated data and is not provided to either filter. The PF keeps centered
process and measurement models. Both filters use identical simulated data for
each Monte-Carlo seed.

Separately, one outlier is injected every eight observations ($+200$~m
in RO and $50\%$ of the terrain range in TBN). The CF handles these outliers
with $(\ell,q)=(10,2)$. This choice is conservative for the CF, as it allows more
constraint violations than actually occur, so the relaxed intersection retains
a larger set of feasible states and therefore yields weaker contraction.
The PF uses a robust likelihood with a uniform contamination component of weight
$\beta=1/8$ and resamples when the normalized effective sample size (ESS) falls
below $0.5$. Computational workloads span $500$ to $10^5$ PF particles and
$5\times10^{-5}$ to $1$s CF compression budgets, with $100$ runs per operating
point.

A run is considered successful if the correct hypothesis is preserved throughout
the complete trajectory. For the PF, the success indicator is
\begin{equation}
	S_{\rm PF}
	=
	\mathbf 1\!\left[
	\forall k,\
	\min_i
	\|\mathbf p_k^{(i)}-\mathbf p_k^\star\|
	\leq r
	\right],
	\label{eq:applications:success:pf}
\end{equation}
with $r=5$m in RO and $r=3$m in TBN. A CF run is successful if the true
state belongs to the materialized outer paving at every measurement time.
Reliability is then defined as the fraction of successful runs over the
$100$ Monte-Carlo trials. Ground-truth diagnostics are excluded from timing,
and the final paving area $A$ measures the compactness obtained for each
compression budget.

\subsection{Range-only localization}
\label{sec:applications:range}

We consider two range-only scenarios: RO1 uses a single identified beacon, whereas RO4 uses four known but indistinguishable beacons, introducing association ambiguity. For an identified beacon $\mathbf b$, the observation is $\|\mathbf p(t_k)-\mathbf b\|\in[r_k]$. The initial domain is $[-450,250]\times[-350,350]$~m and inputs and ranges arrive every $0.1$ and $10$s, respectively. The heading, speed, and range standard deviations are $0.01$rad, $0.05$m.s$^{-1}$, and $2$m, respectively, with $\pm2\sigma$ CF bounds. RO1 uses $\alpha\in\{0,0.15,0.35,0.50\}$.

For RO1, the distance constraint is handled through the inverse-contractor
construction of Section~\ref{sec:contractors:inverse}, using HC4Revise-style
forward-backward propagation.
In RO4, the source identity is unknown. If $\mathcal C^{\rm dist}_{k,j}$ denotes the distance contractor associated with beacon $\mathbf b_j$, the anonymous observation is expressed directly as a union:
\begin{equation}
 \mathcal C^{\rm RO4}_k=\bigcup_{j=1}^{4}\mathcal C^{\rm dist}_{k,j},
 \qquad
 \mathcal C^{\rm dist}_{k,j}
 =\operatorname{Inv}_{[\|\,\cdot-\mathbf b_j\|]}(\mathcal C_{[r_k]}).
 \label{eq:applications:ro4:union}
\end{equation}
Thus ambiguity is handled directly through the contractor algebra, without introducing an explicit
association state or changing the filter architecture. The PF uses the corresponding equal-prior likelihood mixture.

In unbiased RO1 (Fig.~\ref{fig:applications:states}(a)), both filters retain the correct arc. With $50\%$ bias (Fig.~\ref{fig:applications:states}(b)), the PF drifts away while the CF keeps the trajectory enclosed. RO4 (Fig.~\ref{fig:applications:states}(c)) exhibits a larger initial sampling burden, since four annuli must initially be represented, but later data rapidly resolve the association ambiguity. It is shown qualitatively only.
At $10^5$ particles and $r=5$m (Fig.~\ref{fig:applications:quantitative}(a)), PF success is $99\%$ at zero bias, $61\%$ at $15\%$, and $0\%$ at $35$-$50\%$. Changing $r$ to $2.5$ or $10$m does not alter the strong bias conclusion. 

\begin{figure*}[!t]
\centering
\noindent
\begin{minipage}[t]{0.495\textwidth}\centering
\begin{minipage}[t]{0.485\linewidth}\vspace{0pt}\centering\includegraphics[width=\linewidth]{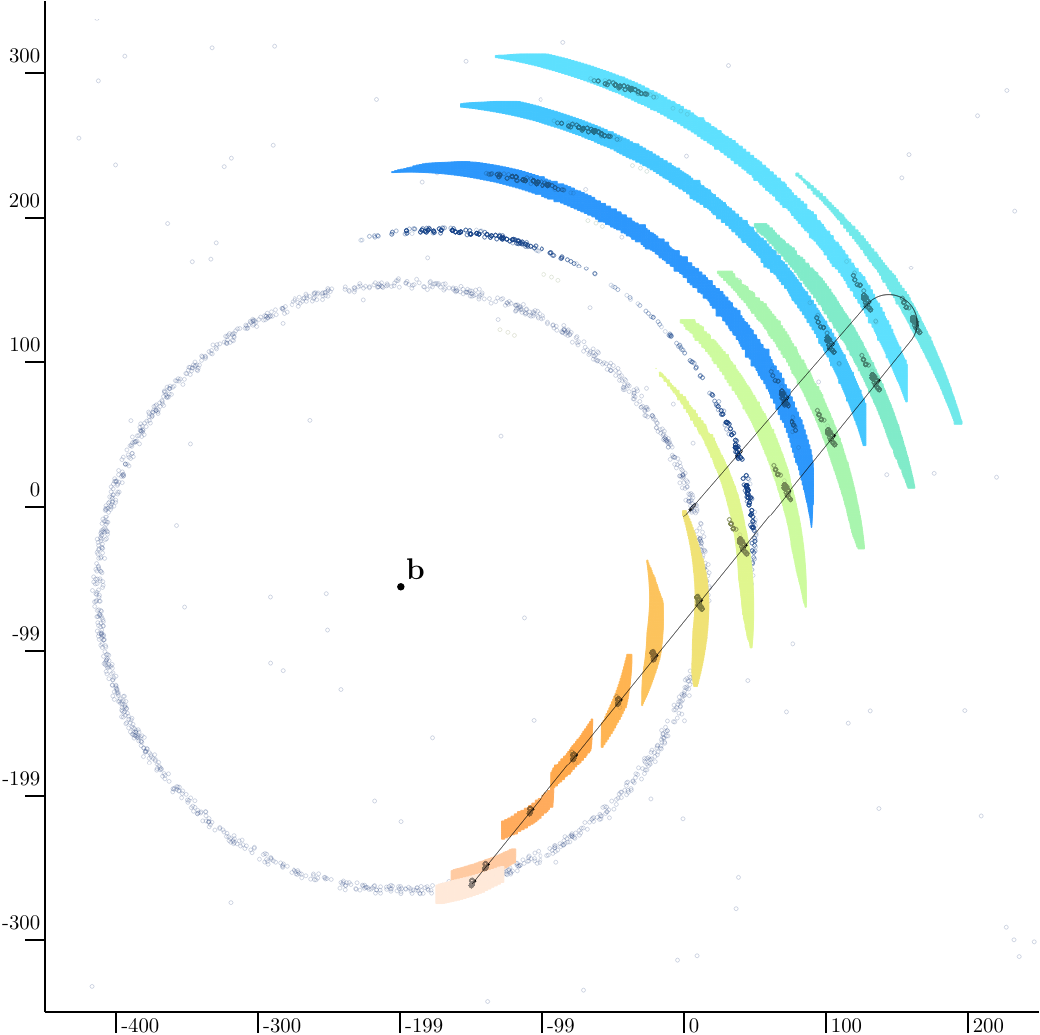}\end{minipage}\hfill
\begin{minipage}[t]{0.485\linewidth}\vspace{0pt}\centering\IfFileExists{./images/ro1_outliers_bias0_zoom.pdf}{\includegraphics[width=\linewidth]{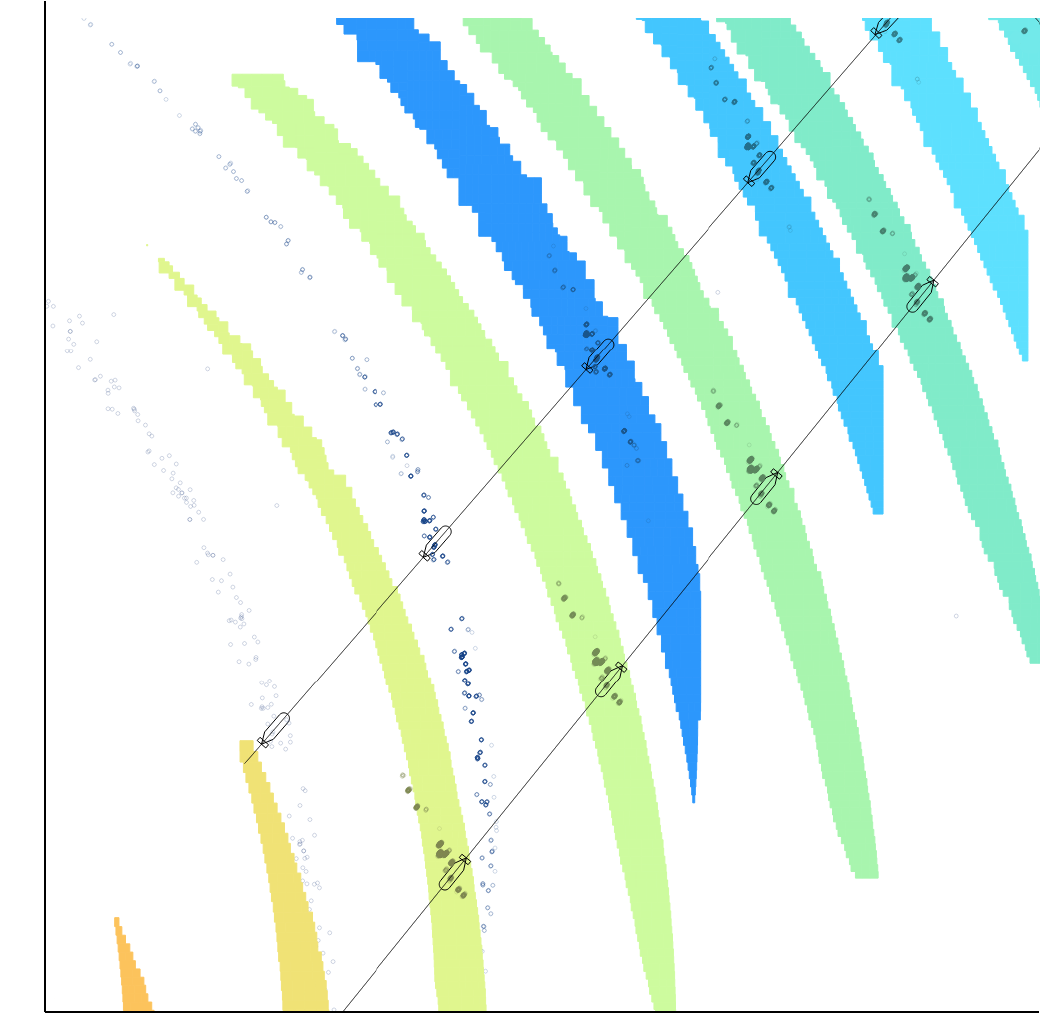}}{\fbox{\parbox[c][\dimexpr\linewidth-2\fboxsep-2\fboxrule\relax][c]{\dimexpr\linewidth-2\fboxsep-2\fboxrule\relax}{\centering\scriptsize zoom}}}\end{minipage}\\[0.1ex]
  {\scriptsize (a) RO1, $0\%$: PF 114.45s, CF 3.51s}
\end{minipage}\hfill
\begin{minipage}[t]{0.495\textwidth}\centering
\begin{minipage}[t]{0.485\linewidth}\vspace{0pt}\centering\includegraphics[width=\linewidth]{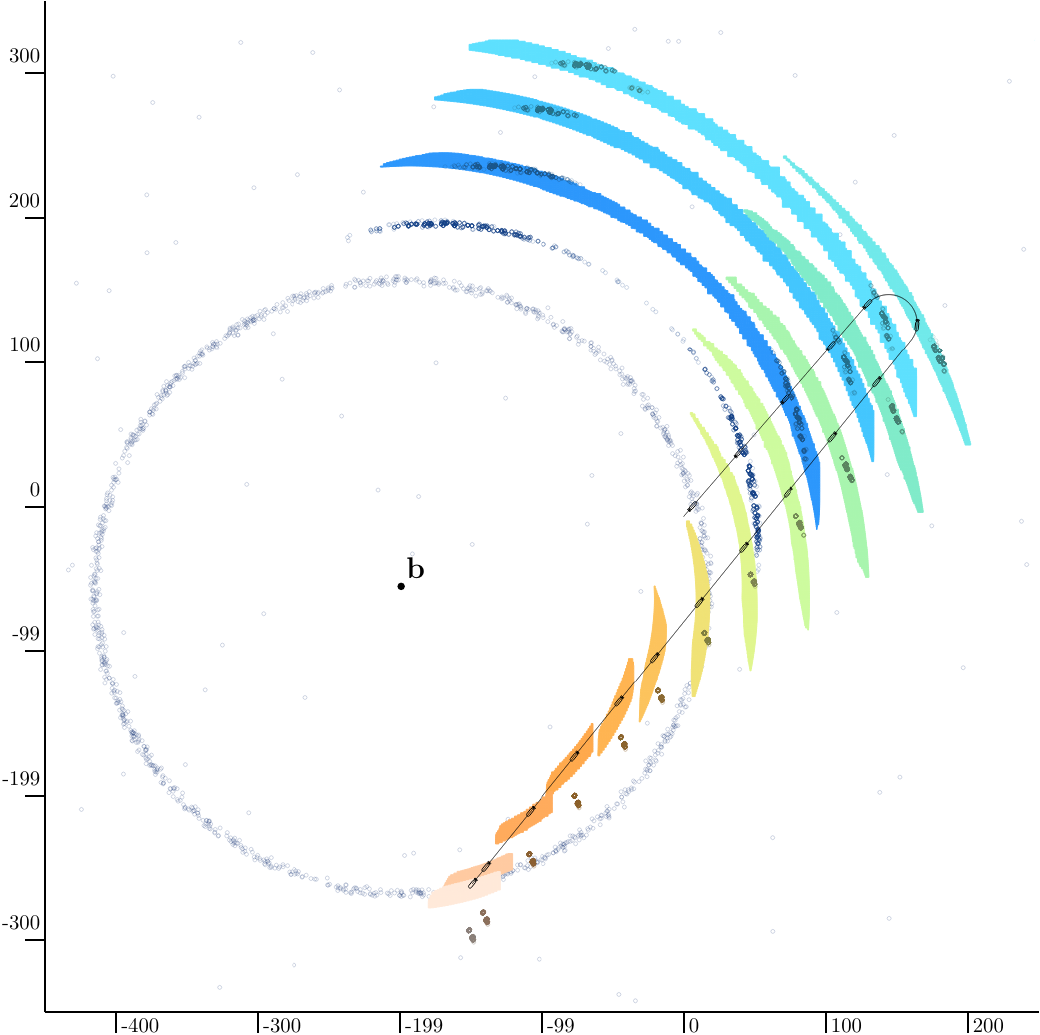}\end{minipage}\hfill
\begin{minipage}[t]{0.485\linewidth}\vspace{0pt}\centering\IfFileExists{./images/ro1_outliers_bias50_zoom.pdf}{\includegraphics[width=\linewidth]{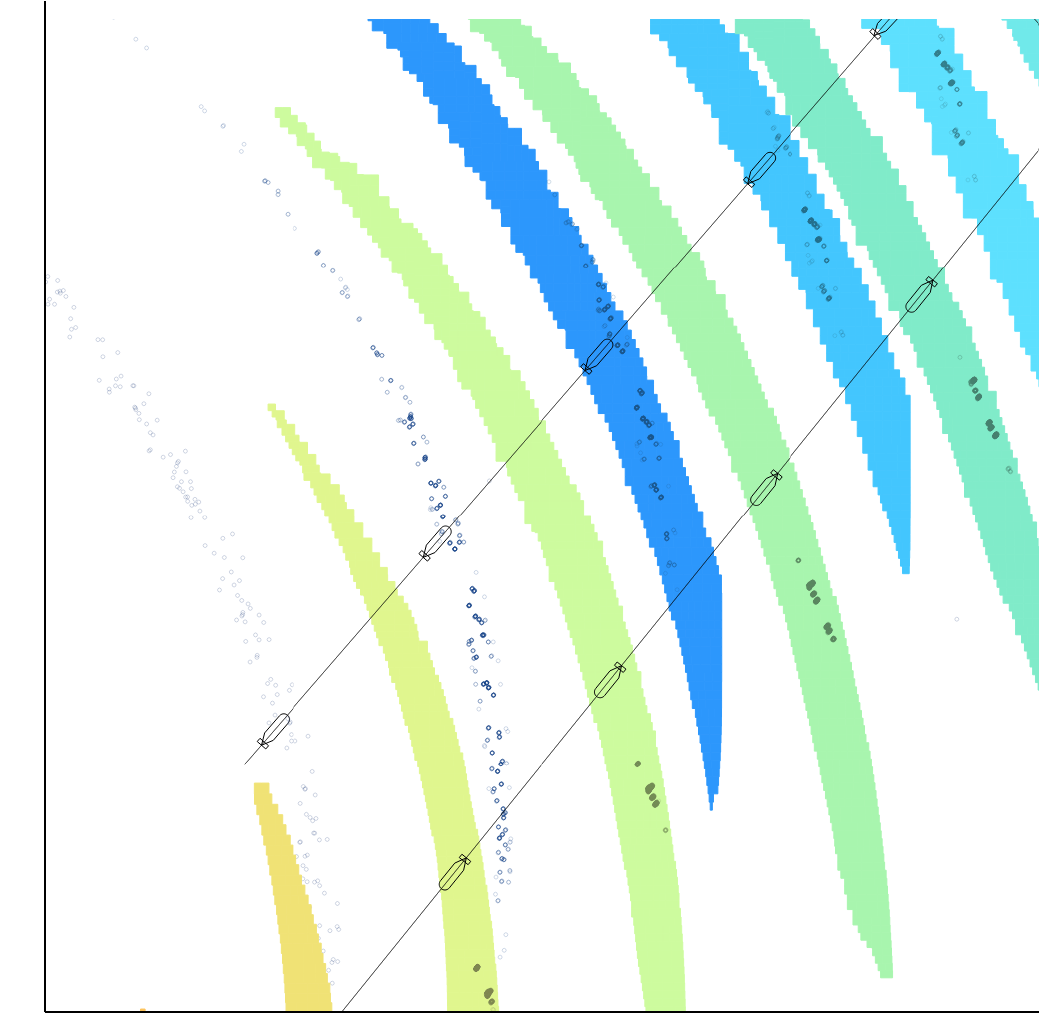}}{\fbox{\parbox[c][\dimexpr\linewidth-2\fboxsep-2\fboxrule\relax][c]{\dimexpr\linewidth-2\fboxsep-2\fboxrule\relax}{\centering\scriptsize zoom}}}\end{minipage}\\[0.1ex]
  {\scriptsize (b) RO1, $50\%$: PF 115.22s, CF 6.84s}
\end{minipage}
\vspace{0.15ex}

\noindent
\begin{minipage}[t]{0.495\textwidth}\centering
\begin{minipage}[t]{0.485\linewidth}\vspace{0pt}\centering\includegraphics[width=\linewidth]{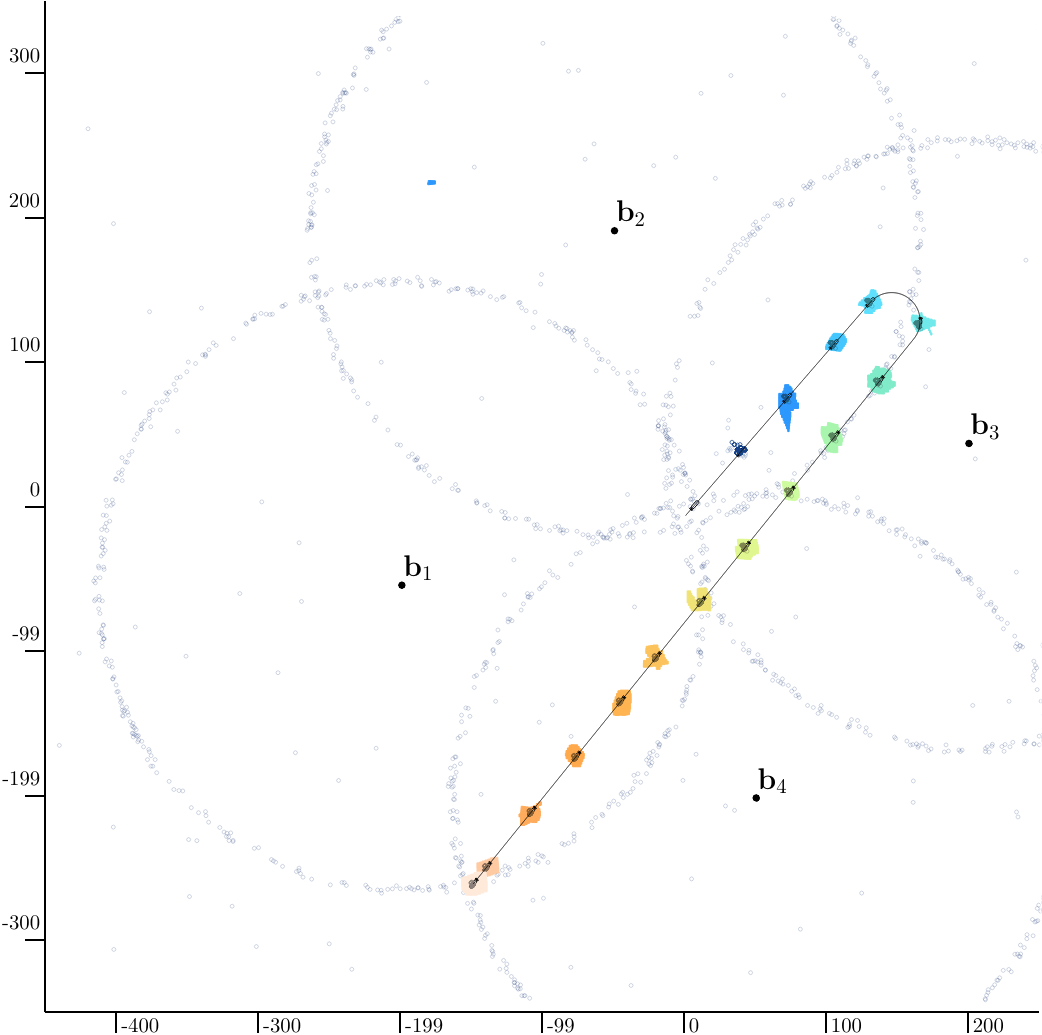}\end{minipage}\hfill
\begin{minipage}[t]{0.485\linewidth}\vspace{0pt}\centering\IfFileExists{./images/ro4_outliers_bias0_zoom.pdf}{\includegraphics[width=\linewidth]{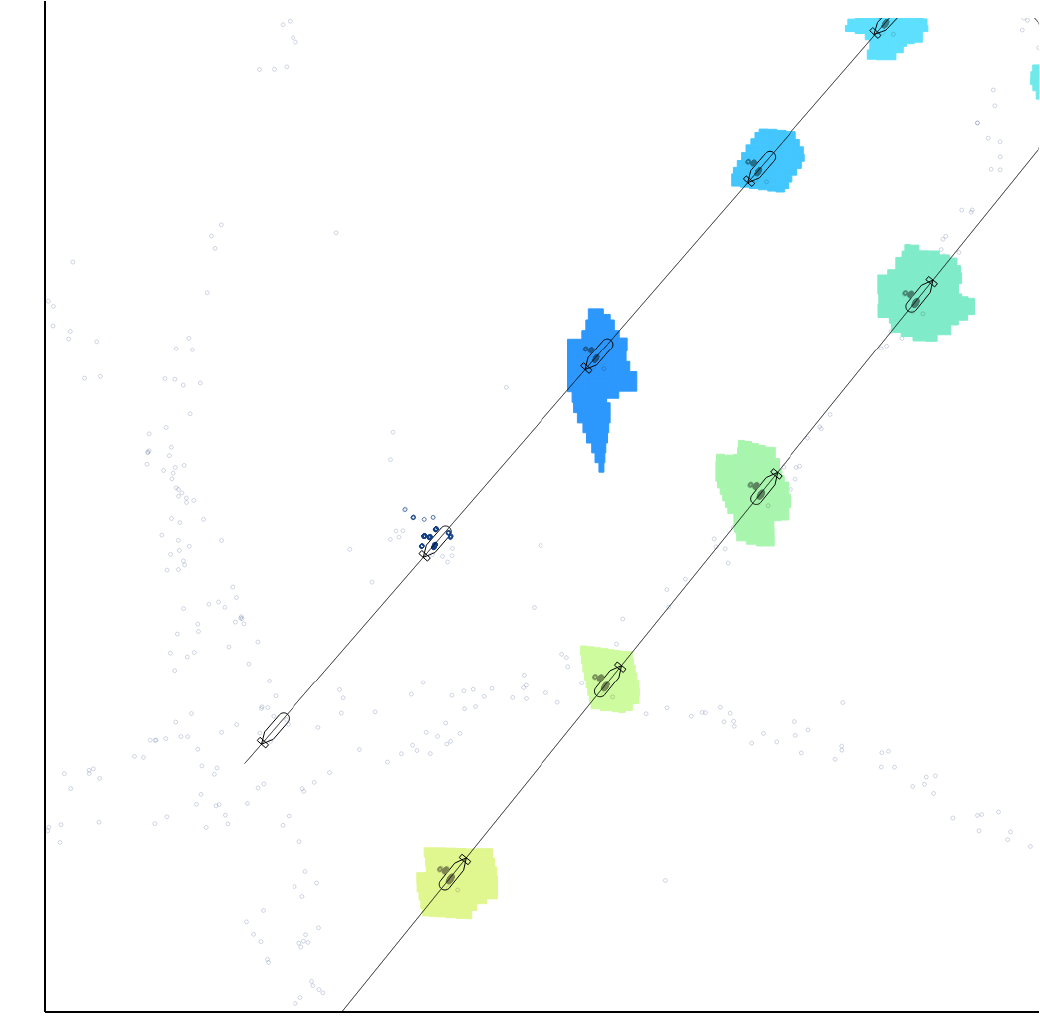}}{\fbox{\parbox[c][\dimexpr\linewidth-2\fboxsep-2\fboxrule\relax][c]{\dimexpr\linewidth-2\fboxsep-2\fboxrule\relax}{\centering\scriptsize zoom}}}\end{minipage}\\[0.1ex]
  {\scriptsize (c) RO4, $0\%$: PF 114.80s, CF 6.82s}
\end{minipage}\hfill
\begin{minipage}[t]{0.495\textwidth}\centering
\begin{minipage}[t]{0.485\linewidth}\vspace{0pt}\centering\includegraphics[width=\linewidth]{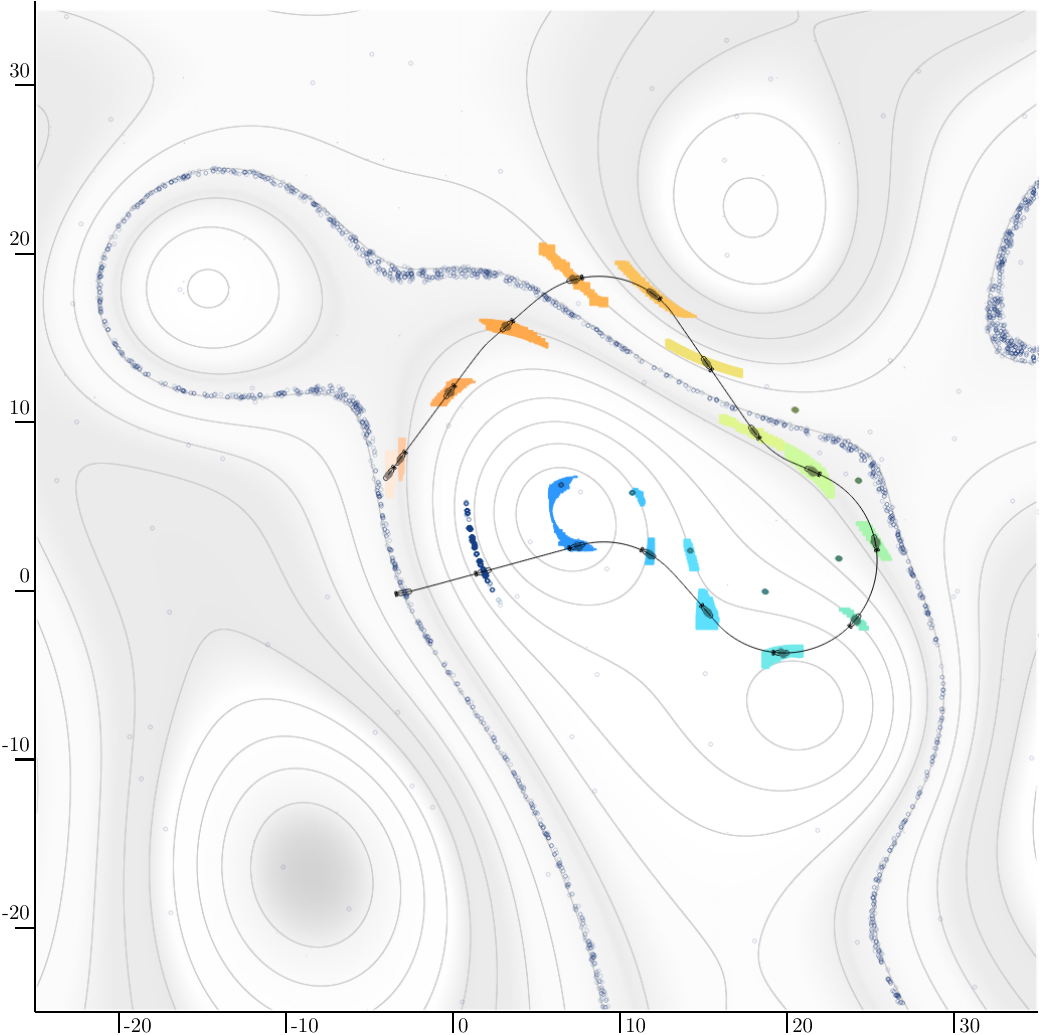}\end{minipage}\hfill
\begin{minipage}[t]{0.485\linewidth}\vspace{0pt}\centering\IfFileExists{./images/tbn1_outliers_bias0_zoom.pdf}{\includegraphics[width=\linewidth]{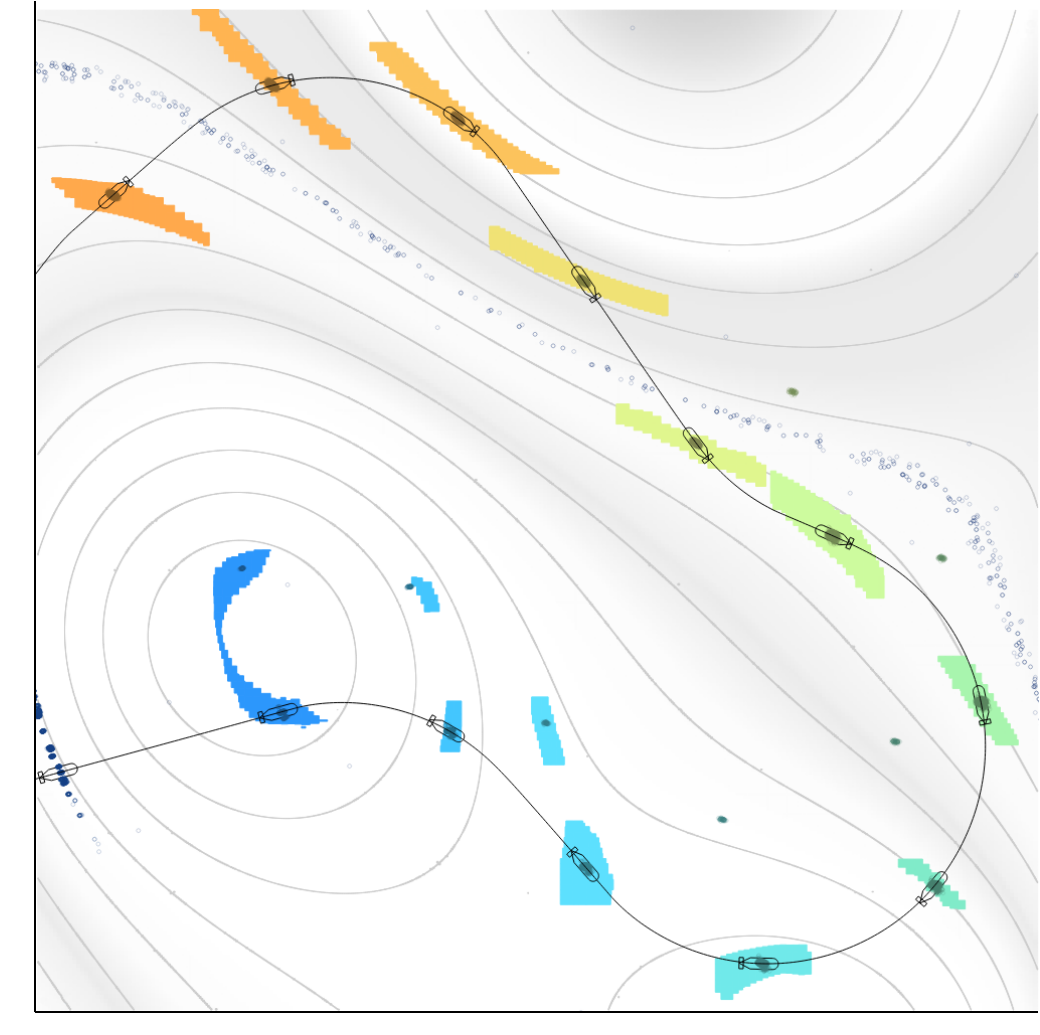}}{\fbox{\parbox[c][\dimexpr\linewidth-2\fboxsep-2\fboxrule\relax][c]{\dimexpr\linewidth-2\fboxsep-2\fboxrule\relax}{\centering\scriptsize zoom}}}\end{minipage}\\[0.1ex]
  {\scriptsize (d) TBN1, $0\%$: PF 13.51s, CF 7.44s}
\end{minipage}
\vspace{0.15ex}

\noindent
\begin{minipage}[t]{0.495\textwidth}\centering
\begin{minipage}[t]{0.485\linewidth}\vspace{0pt}\centering\includegraphics[width=\linewidth]{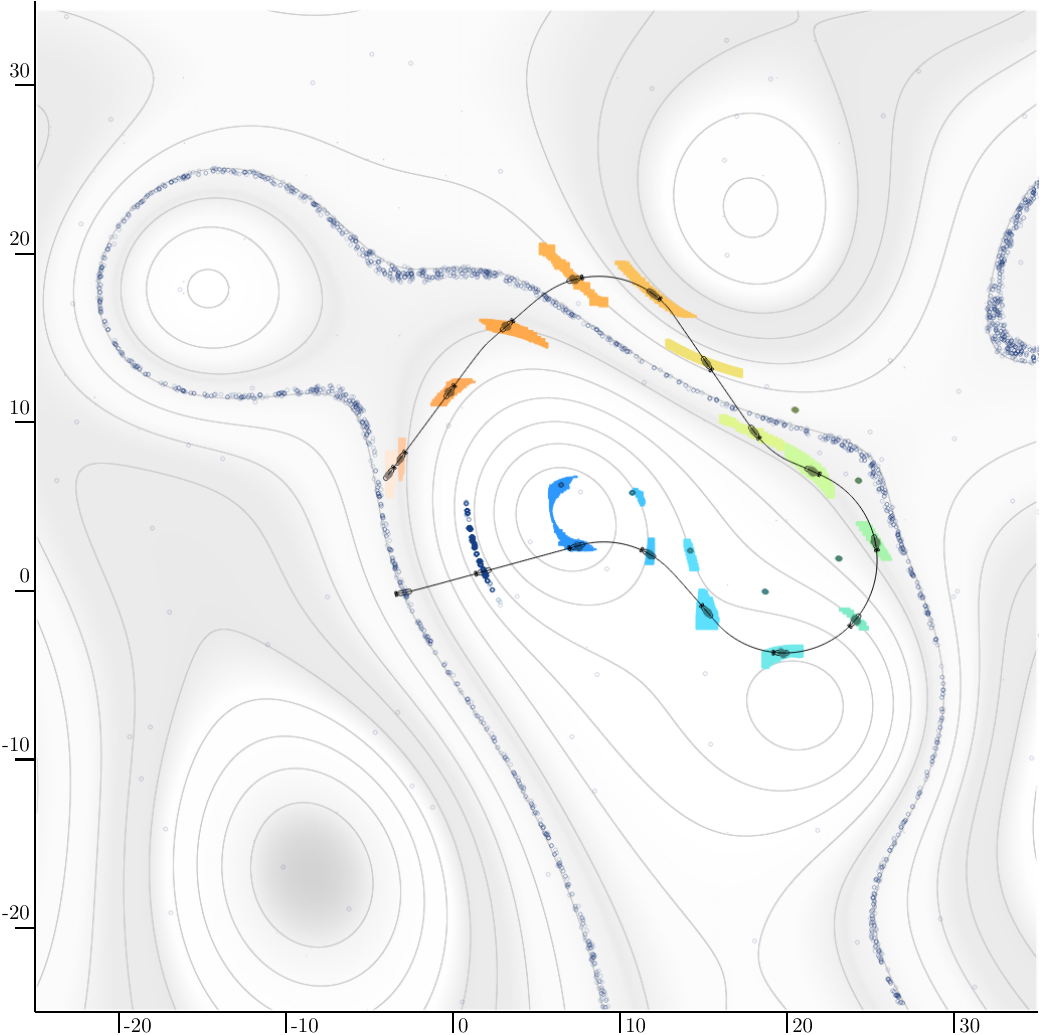}\end{minipage}\hfill
\begin{minipage}[t]{0.485\linewidth}\vspace{0pt}\centering\IfFileExists{./images/tbn1_outliers_bias30_zoom.pdf}{\includegraphics[width=\linewidth]{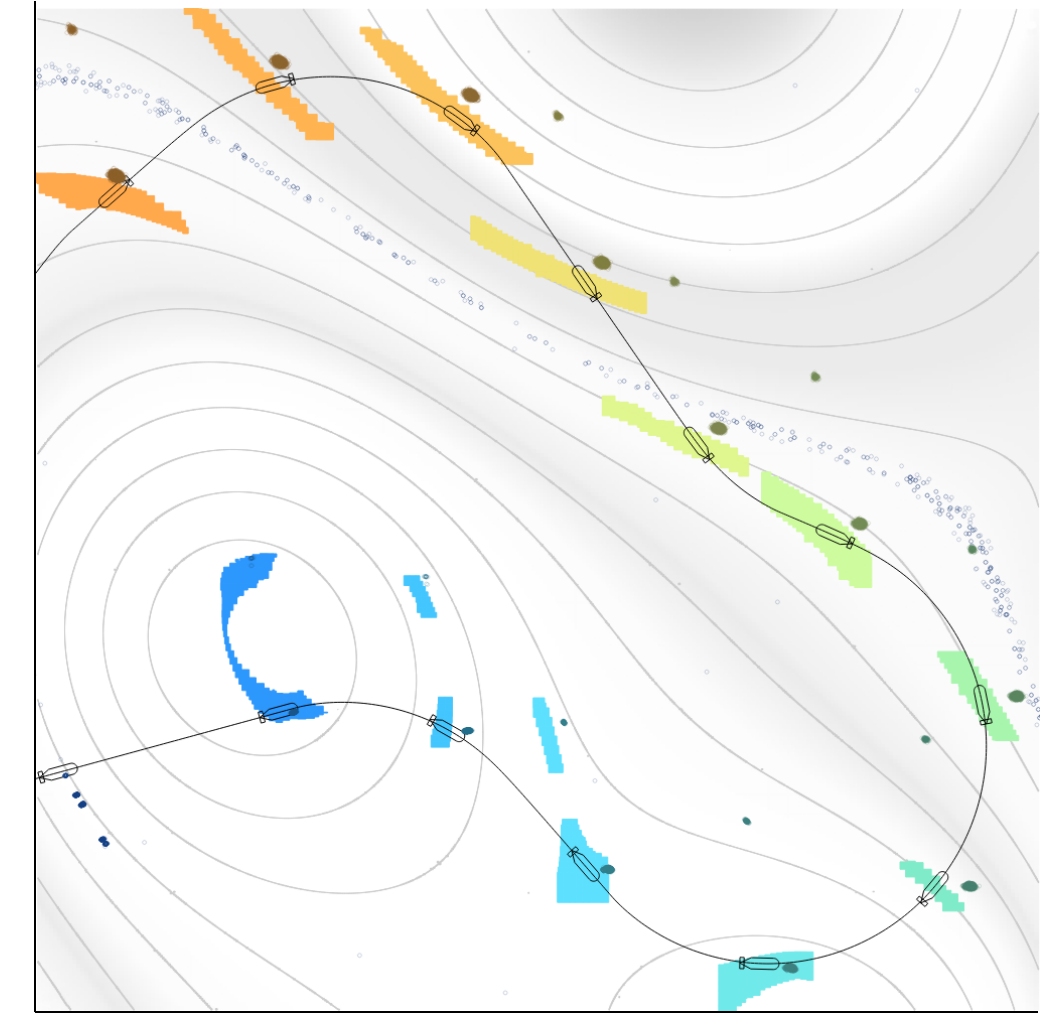}}{\fbox{\parbox[c][\dimexpr\linewidth-2\fboxsep-2\fboxrule\relax][c]{\dimexpr\linewidth-2\fboxsep-2\fboxrule\relax}{\centering\scriptsize zoom}}}\end{minipage}\\[0.1ex]
  {\scriptsize (e) TBN1, $30\%$: PF 13.38s, CF 7.45s}
\end{minipage}\hfill
\begin{minipage}[t]{0.495\textwidth}\centering
\begin{minipage}[t]{0.485\linewidth}\vspace{0pt}\centering\includegraphics[width=\linewidth]{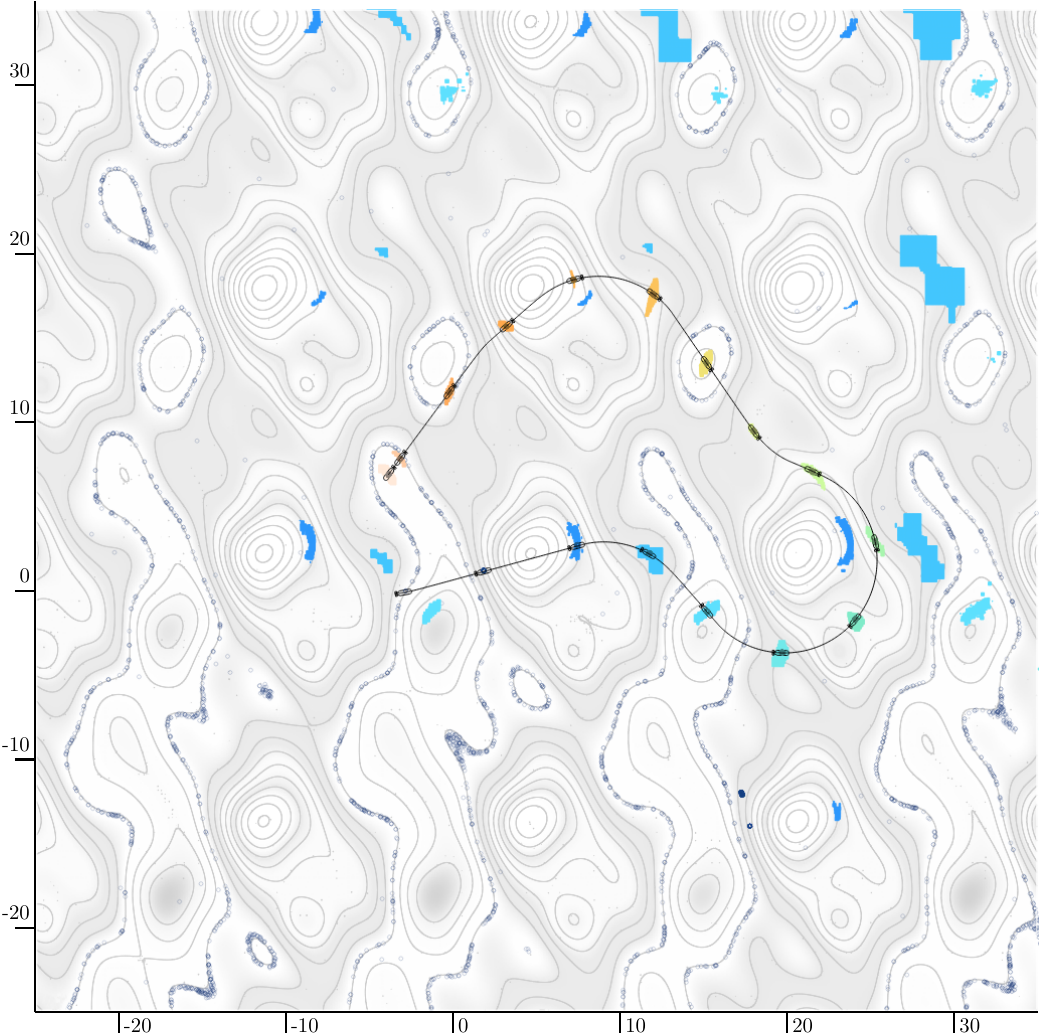}\end{minipage}\hfill
\begin{minipage}[t]{0.485\linewidth}\vspace{0pt}\centering\IfFileExists{./images/tbn2_outliers_bias0_zoom.pdf}{\includegraphics[width=\linewidth]{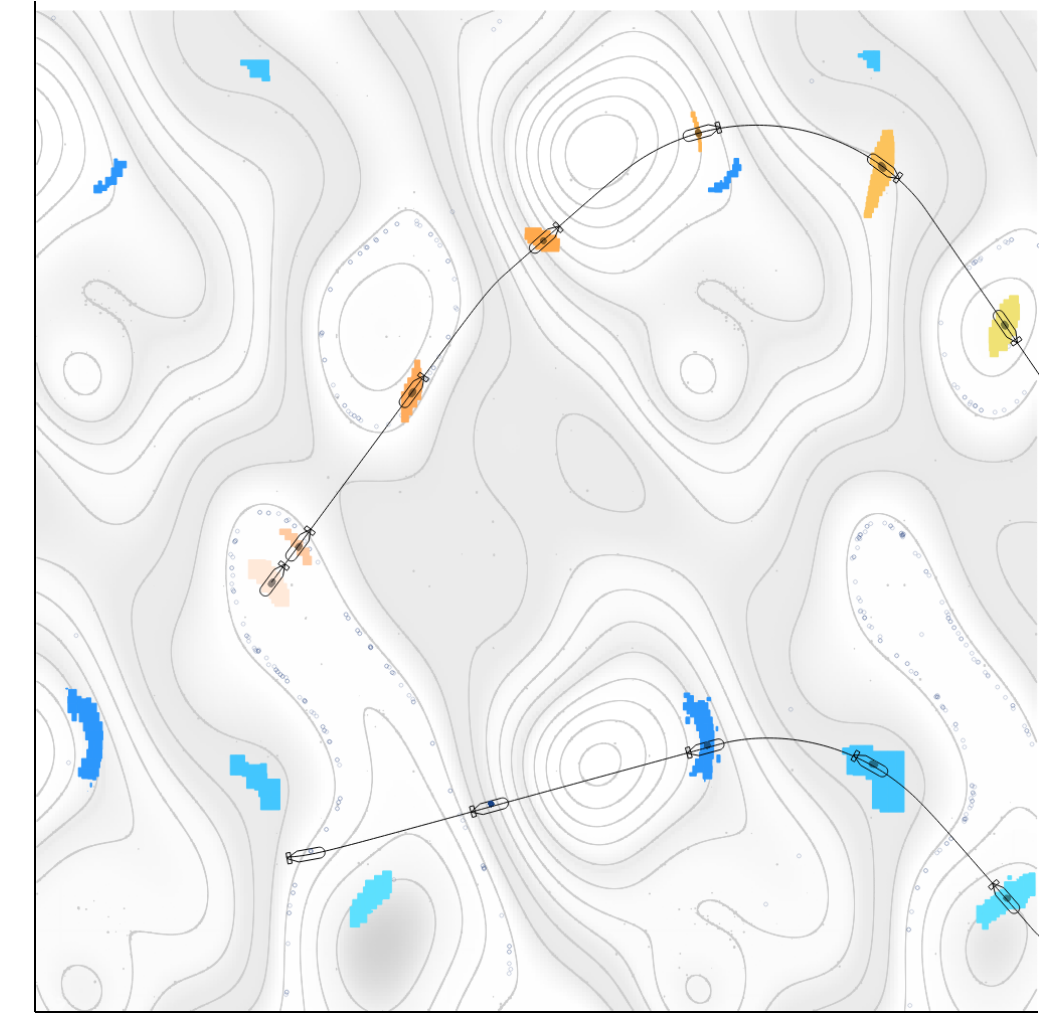}}{\fbox{\parbox[c][\dimexpr\linewidth-2\fboxsep-2\fboxrule\relax][c]{\dimexpr\linewidth-2\fboxsep-2\fboxrule\relax}{\centering\scriptsize zoom}}}\end{minipage}\\[0.1ex]
  {\scriptsize (f) TBN2, $0\%$: PF 30.86s, CF 36.37s}
\end{minipage}
\vspace{-0.4ex}

\caption{Representative CF pavings and PF particles (global view and zoom). Subcaptions
	report measured times. Panels (a)-(c) use $N=10^5$ with CF budgets $0.05$,
	$0.1$, and $0.1$s; (d)-(e) use $N=10^5$, $0.2$s; (f) uses
	$N=2\times10^5$, $1$s. Stars in
	Fig.~\ref{fig:applications:quantitative} mark the benchmarked PF configurations
	of (a), (b), (d), and (e); RO4 is qualitative; in TBN2, only the PF configuration in (f) lies outside the benchmark grid.}
\label{fig:applications:states}
\end{figure*}

\subsection{Terrain-based navigation}
\label{sec:applications:tbn}

The TBN observation is a known scalar field $h:\mathbb R^2\to\mathbb R$,
\begin{equation}
	z_k=h(\mathbf p(t_k))+e_k,\qquad
	h(\mathbf p(t_k))\in[z_k]\quad\text{for the CF}.
	\label{eq:applications:tbn}
\end{equation}
The initial domain is $[-25,35]\times[-25,35]$~m and inputs and terrain observations arrive every $0.1$ and $1$s, respectively. Heading, speed, and terrain standard deviations are $0.01$rad, $0.05$m.s$^{-1}$, and $0.05$, respectively, with $\pm2\sigma$ CF bounds and $\alpha\in\{0,0.10,0.20,0.30\}$.

\begin{figure*}[!t]
\centering
\includegraphics[width=0.24\textwidth]{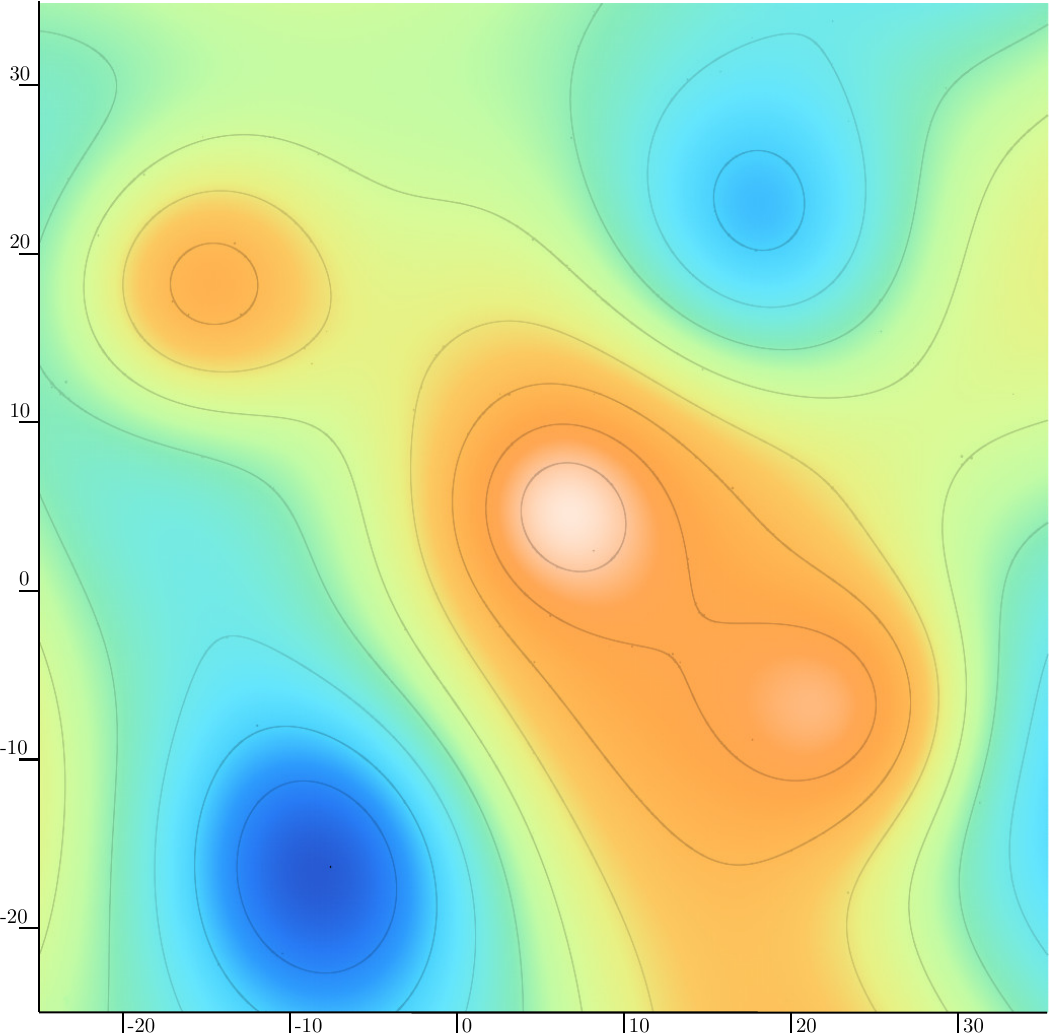}\hspace{1.2em}
\includegraphics[width=0.24\textwidth]{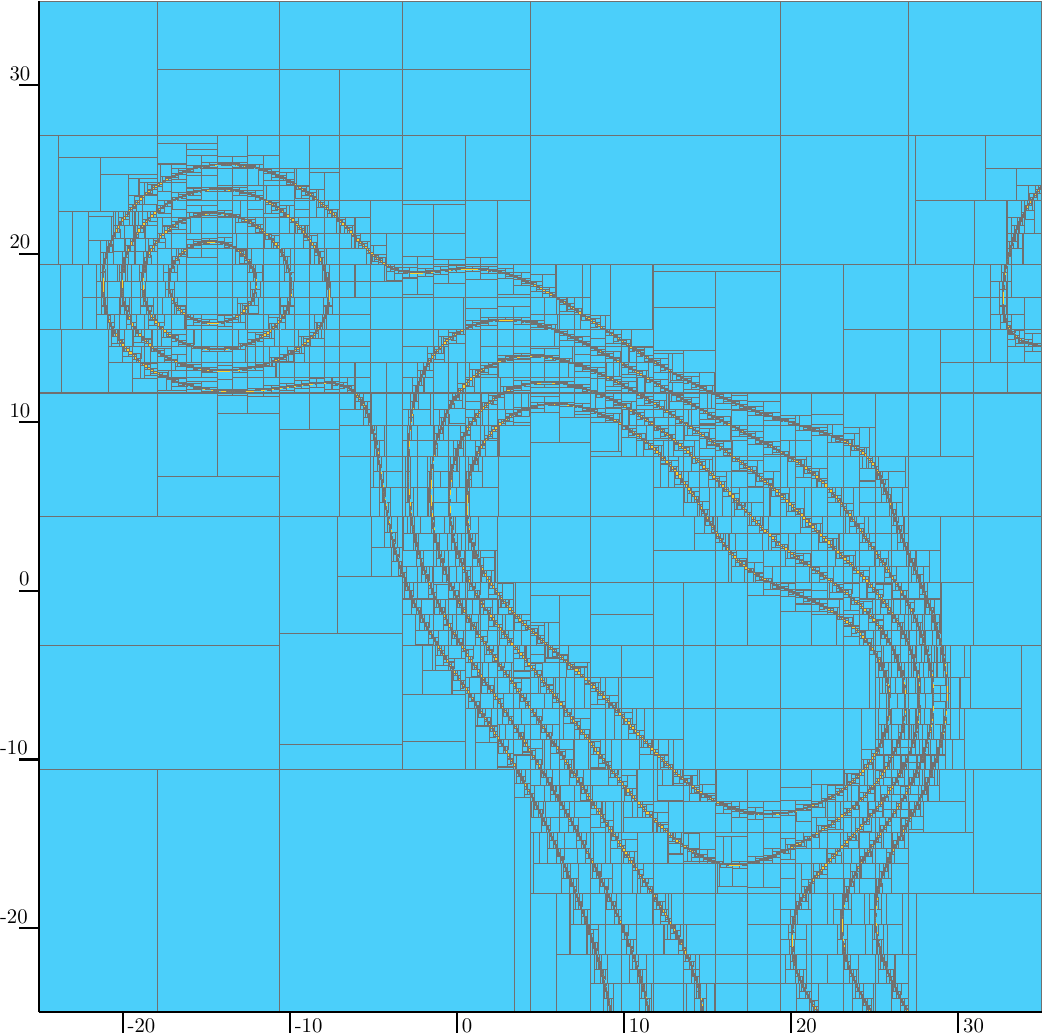}\\[-0.5ex]
{\scriptsize (a) terrain field \hspace{7em} (b) inverse-observation paving}\par
\caption{TBN terrain and direct paving of
	$\textstyle\bigcup_{i=1}^{5}\operatorname{Inv}_{[h]}(\mathcal C_{[z_i]})$
	for five selected observation intervals $[z_i]$. Even a scalar observation can
	induce disconnected, elongated spatial hypotheses before prediction is applied.
	The contraction history visible in the right-hand paving also reveals the
	non-minimality of the terrain contractor on wide boxes: several boxes remain
	only weakly contracted until further subdivision reduces the interval dependency
	effect.}
\label{fig:applications:tbn:contractor}
\end{figure*}

We consider two terrain realizations. TBN1 has sufficiently distinctive relief along the trajectory for spatial alternatives to be rapidly disambiguated, whereas TBN2 contains repeated spatial motifs that sustain multiple plausible localization branches and is deliberately more challenging.

In Fig.~\ref{fig:applications:states}(d)-(e), TBN1 alternatives collapse rapidly. At $30\%$ bias the PF cloud temporarily departs from the trajectory but later reconcentrates, while still satisfying the all-times success criterion. TBN2 combines its repeated spatial motifs with an unfavorable repeated-variable terrain expression. Panel~(f) uses $2\times10^5$ PF particles, outside the benchmark grid, with a benchmarked $1$s CF budget. The PF/CF times are comparable ($30.86/36.37$s).

At $10^5$ particles (Fig.~\ref{fig:applications:quantitative}(b)-(c)), TBN1 PF success is $95/87/82/66\%$ for $0/10/20/30\%$ bias. TBN2 remains harder: doubling from $5\times10^4$ to $10^5$ particles raises success from $39/28/23/21\%$ to $52/43/36/29\%$. Increasing the success radius to $10$m changes little, confirming failures on distinct spatial aliases. CF retains the truth in every TBN run. Its paving area additionally reflects contractor strength.

\begin{figure*}[t]
\centering
\begin{minipage}[t]{0.32\textwidth}\centering
 \includegraphics[height=1.2\linewidth]{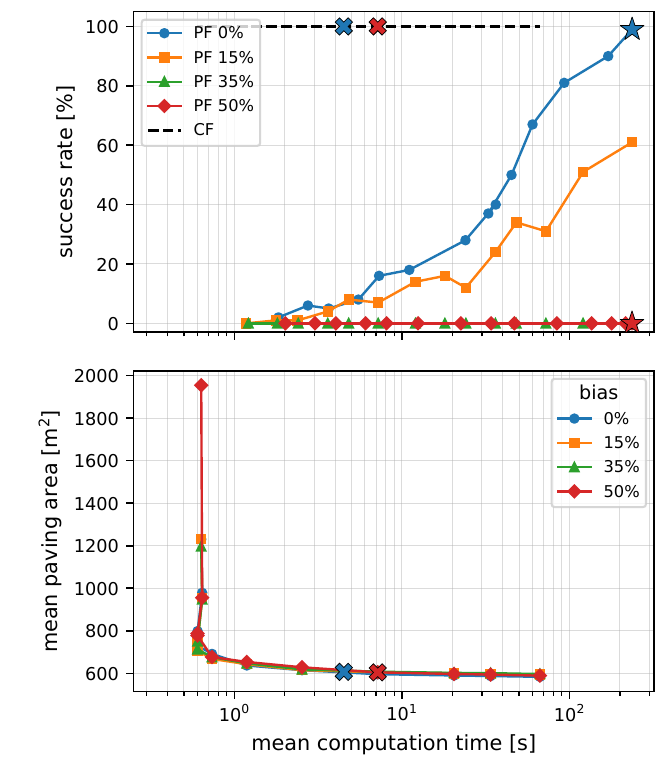}\\[-0.6ex]
 {\scriptsize (a) RO1}
\end{minipage}\hfill
\begin{minipage}[t]{0.32\textwidth}\centering
 \includegraphics[height=1.2\linewidth]{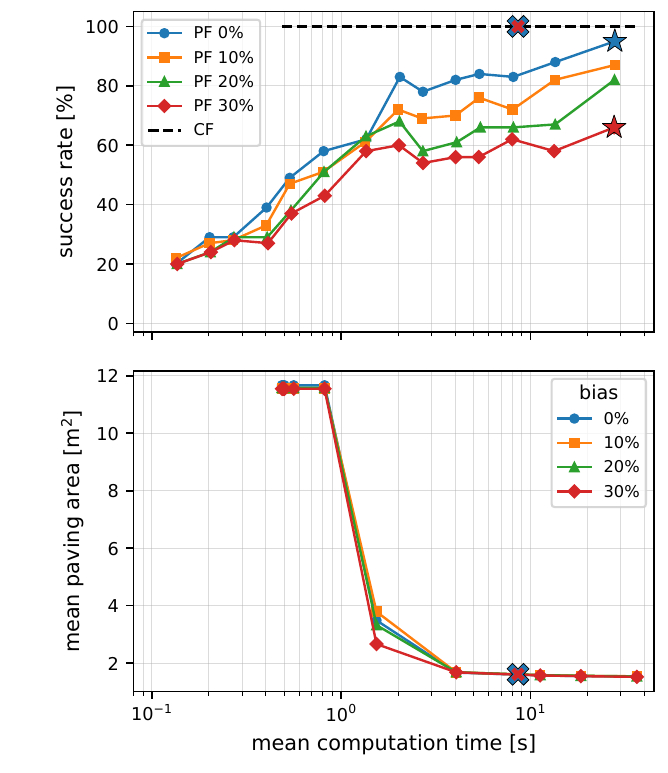}\\[-0.6ex]
 {\scriptsize (b) TBN1}
\end{minipage}\hfill
\begin{minipage}[t]{0.32\textwidth}\centering
 \includegraphics[height=1.2\linewidth]{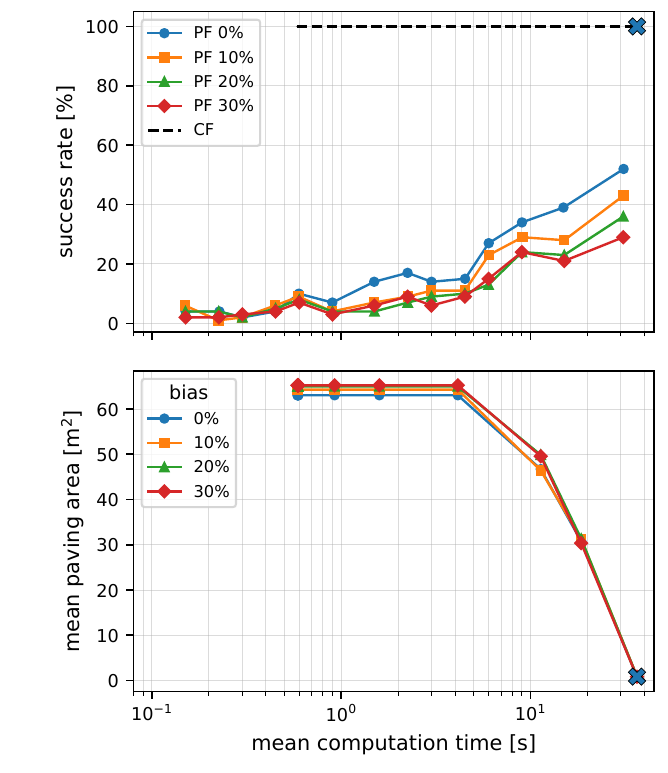}\\[-0.6ex]
 {\scriptsize (c) TBN2}
\end{minipage}
\caption{Top: success rate (reliability) versus measured filtering time; bottom: mean paving area versus measured CF time. $100$ runs per point. PF curves correspond to bias levels; CF is $100\%$ for every tested point and is shown once. Stars identify the $N=10^5$ PF configurations illustrated in Fig.~\ref{fig:applications:states}. The qualitative TBN2 run in Fig.~\ref{fig:applications:states}(f) uses $2\times10^5$ particles and lies outside the plotted benchmark grid.}
\label{fig:applications:quantitative}
\end{figure*}
\FloatBarrier
\subsection{Discussion}
\label{sec:applications:discussion}

The two diagnostics in Fig.~\ref{fig:applications:quantitative} address
complementary aspects of the comparison: reliability measures whether the
correct hypothesis is preserved, whereas paving area measures the compactness
of the CF representation for a given computational effort.

\paragraph{Model mismatch and reliability.}
RO1 isolates persistent probabilistic model mismatch. Increasing the particle
count reduces Monte-Carlo error, but does not correct a systematically
mis-centered probabilistic model. Persistent biases and drifts are a recognized
source of degradation in PFs, and dedicated variants can explicitly estimate or
compensate such errors~\cite{Zhu2017RobustPF}. Here, we deliberately compare the two filters without dedicated bias estimation.
Neither filter uses the bias fraction $\alpha$. The CF only requires
non-outlier errors to remain within their prescribed bounds, which explains why
the truth is retained in every CF run despite the imposed bias. Effects that may
violate these bounds must therefore be accounted for in the uncertainty model.

\paragraph{Contractor strength and paving budget.}
The different area curves can largely be explained by the strength of the
observation contractors. In RO1,
$d_{\mathbf b}(\mathbf p)=\sqrt{(p_1-b_1)^2+(p_2-b_2)^2}$ contains each
position variable only once when the square is treated as a unary operation.
Moore's single-occurrence theorem therefore makes the natural interval
evaluation range-exact, and exact forward-backward propagation yields the
minimal box contraction~\cite{Moore79}.
Accordingly, the RO1 paving is already nearly saturated at a $0.5$s compression
budget, with less than $1\%$ further area reduction at $1$s.

The terrain contractor is less favorable. The analytical function
$h(\mathbf p)$ contains repeated occurrences of $p_1$ and $p_2$, so interval
dependency weakens contraction on wide boxes. HC4Revise-style
forward-backward propagation~\cite{BenhamouICLP99} combined with a centered
form~\cite{Jaulin2024Centered} reduces this effect, but the contractor remains non-minimal away from the
small-box limit; subdivision moves it toward the asymptotically minimal regime. TBN1 still stabilizes rapidly, with only
$0.8$-$1.4\%$ area reduction when the compression budget increases from
$0.5$ to $1$s. TBN2 requires much deeper subdivision: over the same budget
increase, the mean paving area falls from $30$-$31$ to
$0.81$-$0.86$m$^2$. This reduction reflects both improved local contraction
and a finer resolution of the multiple feasible regions. In all cases, the budget
affects paving tightness, not the enclosure guarantee.

\paragraph{Multimodality and representation.}
TBN1 illustrates a case in which later observations resolve the initial spatial
ambiguity, so increasing the number of particles improves PF reliability.
TBN2 is different: repeated terrain motifs sustain distinct spatial aliases, and
additional particles improve sampling without removing this ambiguity at the
tested particle counts. This also highlights a broader issue for classical set-membership
filtering: a single convex set, such as one box, ellipsoid, or zonotope, cannot
represent strongly disconnected hypotheses without enclosing the gaps between
them. Richer representations are therefore needed for such problems, including
unions of sets, multizonotopes, or pavings. Recent terrain-aided
navigation work similarly uses multizonotopes to preserve several spatial
hypotheses under terrain ambiguity~\cite{Ma2025Multizonotope}. CF represents such alternatives symbolically through contractor composition,
without imposing a single convex state representation or requiring explicit
management of multiple convex components.

\paragraph{Cost of validated numerics.}
The reported CF timings include the cost of outward-rounded interval arithmetic, providing guarantees down to floating-point
rounding. This numerical rigor is stronger than what is usually required in robotics,
where the main concern is the algorithmic enclosure of the feasible states. It
is retained here because the implementation relies on validated-numerics tools.
Relaxing these floating-point guarantees could substantially reduce computation
time.
\section{Conclusion}
\label{sec:conclusion}

This paper introduced \emph{Contractor Filters} (CFs), a contractor-valued prediction-update architecture for nonlinear continuous-time state estimation with bounded uncertainties and sporadic outliers. We derived an exact nominal feasible-state recursion and a guaranteed robust state-space enclosure, then implemented prediction and correction through inverse-contractor operations with validated flow inclusions. Conservative sub-paving compression replaces a growing symbolic history by a reusable outer representation without discarding feasible states.
Beyond this enclosure property, the contractor representation provides a
level of abstraction between the filtering equations and their numerical
realization: the same algebraic architecture accommodates nonlinear prediction,
measurement inversion, ambiguity, and outlier handling without prescribing a
specific geometric representation of the state.

The experiments clarify the distinction with particle filtering. Additional particles improve PF reliability when sampling is the dominant limitation, but do not remove strong probabilistic model mismatch or persistent spatial aliases. Across all tested CF operating points, the simulated truth is retained, only the tightness and computational cost of the outer representation vary. TBN2 further shows that obtaining a tight paving can require substantially deeper subdivision, both because the feasible set is strongly multimodal and because the repeated-variable terrain expression introduces interval dependency that weakens contraction on wide boxes. Neither effect compromises the enclosure guarantee.

Future work will target stronger guaranteed contractors for sampled or raster maps, with the aim of reducing interval over-approximation and the subdivision required for a given paving tightness. A backward smoothing pass that propagates future information to past states could extend the framework beyond causal filtering. Temporal uncertainty could also be incorporated into the CF framework, but would require dedicated extensions to represent and invert constraints whose time of occurrence is itself uncertain, possibly relying on tube-based methods. More fundamentally, the CF developed here is a causal estimator: information is propagated forward as measurements arrive. It is therefore not directly suited to global estimation problems such as SLAM, where newly acquired constraints may need to act on past states and on the whole trajectory. Such non-causal problems require richer temporal representations, such as tubes and contractors on tubes \cite{RohouTubeval,robloc}.

\section*{Acknowledgements}

This work was supported by the French National Research Agency (ANR)
through the Electro-Karst (REK) project [grant ANR-22-CE33-0005].

\bibliographystyle{cas-model2-names}
\bibliography{references}
\end{document}